\theoremstyle{definition} 
\DeclareMathOperator{\card}{card}
\def\*#1{\mathbf{#1}}
\newcommand{\indep}{\perp \!}
\newcommand{\betalasso}{\hat{\boldsymbol{\beta}}^{\text{LASSO}}}
\newcommand{\betamle}{\hat{\boldsymbol{\beta}}_{\lambda}}
\newcommand{\betadx}{\hat{\boldsymbol{\beta}}^{d_{\bX_{*,j}}}}
\newcommand{\betady}{\hat{\boldsymbol{\beta}}^{d_{y},j}}
\newcommand{\betatrue}{\bm\beta^0}
\newcommand{\betanull}{\hat{\bm\beta}^{\text{null}}}
\newcommand{\wjtrue}{\bw^{0,j}}
\newcommand{\Sbycrt}{\widehat{\cS}_{\text{BY-CRT}}}
\newcommand{\hatS}{\widehat{\cS}}
\newcommand{\hatP}{\widehat{p}}
\newcommand{\hatK}{\widehat{k}}
\newcommand{\betalogreg}{\hat{\bm\beta}^{\texttt{logreg}}}
\begin{document}
\begin{frontmatter}
  \title{A Conditional Randomization Test for Sparse Logistic Regression in
    High-Dimension}
  \runtitle{\texttt{CRT-logit}}
  \date{}
  
  \begin{aug}
    \author[A]{Binh T. Nguyen},
    \author[B]{Bertrand Thirion},
    \author[C]{Sylvain Arlot}
      \runauthor{Nguyen, Thirion, Arlot}
      \address[A]{T\'elecom Paris, Palaiseau 91120, France\\
        \href{mailto:tuanbinhs@gmail.com}{tuanbinhs@gmail.com}}
      \address[B]{Universit\'e Paris-Saclay, Inria, CEA, Palaiseau 91120,
        France\\ \href{mailto:bertrand.thirion@inria.fr}{bertrand.thirion@inria.fr}}
      \address[C]{Universit\'e Paris-Saclay, CNRS, Inria, Laboratoire de
        math\'ematiques d’Orsay, 91405, Orsay,
        France\\ \href{mailto:sylvain.arlot@universite-paris-saclay.fr}{sylvain.arlot@universite-paris-saclay.fr}}
    \end{aug}
\maketitle
\begin{abstract}
  Identifying the relevant variables for a classification model with correct
  confidence levels is a central but difficult task in high-dimension.
  Despite the core role of sparse logistic regression in statistics and machine
  learning, it still lacks a good solution for accurate inference in the
  regime where the number of features $p$ is as large as or larger than the
  number of samples $n$.
  Here we tackle this problem by improving the Conditional Randomization Test
  (CRT).
  The original CRT algorithm shows promise as a way to output p-values while
  making few assumptions on the distribution of the test statistics.
  As it comes with a prohibitive computational cost even in mildly
  high-dimensional problems, faster solutions based on distillation have been
  proposed.
  Yet, they rely on unrealistic hypotheses and result in low-power solutions.
  To improve this, we propose \emph{CRT-logit}, an algorithm that combines a
  variable-distillation step and a decorrelation step that takes into account
  the geometry of $\ell_1$-penalized logistic regression problem.
  We provide a theoretical analysis of this procedure, and demonstrate its
  effectiveness on simulations, along with experiments on large-scale
  brain-imaging and genomics datasets.
\end{abstract}
\end{frontmatter}
\section{Introduction}
\label{sec:intro-dcrt}
%
Logistic regression is one of the most popular tools in modern applications of
statistics and machine learning, partly due to its relative algorithmic
simplicity.
The method belongs to the class of \emph{generalized linear models} that handle
discrete outcomes, \ie classification problems.
Here, we focus on the binary classification problem, where one observation of
the responses $y \in \{0, 1 \}$ and the data vectors $\bx \in \bbR^p$ follows
the relationship:
\begin{equation}
  \label{eq:logit}
  \bbP(y = 1 \mid \bx) = g(\bx^T\bm\beta^0) = \dfrac{1}{1 + \exp(-\bx^T\bm\beta^0)},
\end{equation}
where $g(x) = 1 / (1 + \exp(-x))$ is the sigmoid function, and $\bm\beta^0$ the
vector of true regression coefficients.
In the classical setting, in which the number of samples $n$ is greater than
the number of features $p$, an estimate $\hat{\bm\beta}$ of the true signals
$\bm\beta^0$ can be obtained using \emph{maximum likelihood estimation} (MLE).
The asymptotic behaviour and derivation of the test statistic, confidence
intervals and p-values of the MLE have been well studied, \eg in
\citet{cox_theoretical_1979}.
The availability of p-values for the test statistics makes it possible to rely
on multiple hypothesis testing, where one wants to test which variables have a
non-zero effect on the outcome, \emph{conditionally} to the remaining
variables, \ie
\begin{equation*}
\text{(null) }\cH_0^j: x_j \indep y \mid \bx_{-j} \quad \text{vs.}  \quad
\text{(alternative) } \cH_{\alpha}^j: x_j \not\indep y \mid \bx_{-j},
\end{equation*}
for each feature $j \in [p] \egaldef \{1, \dots, p\}$ and
$\bx_{-j} \egaldef \{x_1, x_2, \dots, x_{j-1}, x_{j+1}, \dots, x_p\}$.
Equivalently, under the setting in Eq.~\eqref{eq:logit}, we have:
\begin{equation*}
\text{(null) }\cH_0^j: \beta^0_j = 0 \quad \text{vs.}  \quad
\text{(alternative) } \cH_{\alpha}^j: \beta^0_j \neq 0.
\end{equation*}
Unfortunately, this line of analysis cannot be applied to the high-dimensional
regime, where $p$ is larger than $n$, as argued in
\cite{sur_modern_2019,yadlowsky_sloe_2021,zhao_asymptotic_2020}.
These works show that in the regime where
$\lim_{n, p \to \infty} n/p = \kappa$, the MLE estimator exists only when
$\kappa > 2$.
However, we note that this type of analysis is done \emph{without} the addition
of $\ell_1$-regularization to the likelihood function, \ie without using a
\emph{penalized estimator} to enforce sparsity.
\paragraph{Motivation}
Our focus in this paper is to do inference with statistical guarantees on
high-dimensional sparse logistic regression, where $p$ is larger or much larger
than $n$.
This setting is typical in modern applications of pattern recognition, \eg in
brain-imaging or genomics \citep{bach_optimization_2012}, with $p$ as large as
hundreds of thousands --compressible to thousands-- but $n$ stays at most few
thousand.
\begin{figure}[h]
  \centering
  \includegraphics[width=0.5\textwidth]{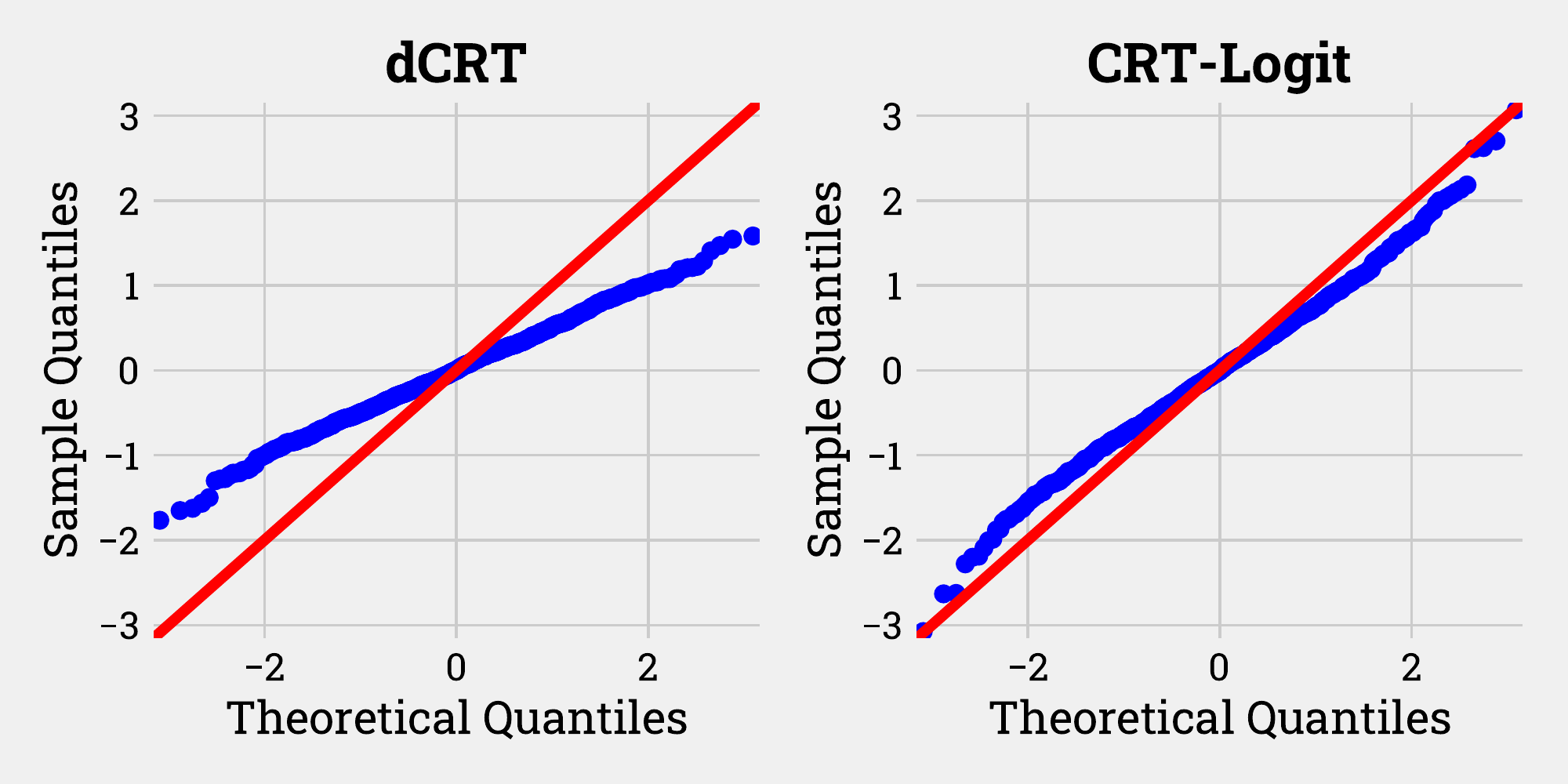}
  \caption{\textbf{QQ-Plot for 1000 samples of test-statistic of a null index
      for logistic regression}, with simulated data, $n=200,p=400$.
  \emph{Left:} Statistics obtained from running Distilled-CRT, and
  \emph{Right:} from our proposed algorithm.
  The empirical distribution of dCRT null-statistic strays far from theoretical
  distribution, which is standard normal, while empirical distribution of
  CRT-logit's null test score is much closer.}
  \label{fig:motivation}
\end{figure}  
The family of methods we consider is the \emph{Conditional Randomization Test
  (CRT)} \citep{candes_panning_2018}.
CRT relies on generating multiple noisy copies of original variables to output
empirical p-values in high-dimensional inference problems.
However, prohibitive computational cost makes CRT impractical, as discussed at
length in
\cite{candes_panning_2018,tansey_holdout_2018,berrett_conditional_2019,liu_fast_2020}.
There have been several lines of research attempting to fix this problem,
most notably the \emph{distilled Conditional Randomization Test (dCRT)}
\citep{liu_fast_2020}.
This work introduced a \emph{distillation step} as a replacement for the
randomized sampling step to compute the importance statistics (see
Section~\ref{sec:background} for more details).
It provides a way to output p-values for multiple types of regression and
classification problems, assuming convergence to Gaussian of the test statistic in
large-sample regime.
Yet, as shown in the left panel of Figure~\ref{fig:motivation}, the originally
proposed dCRT test-statistic for logistic regression does not behave as well as
intended.
\emph{In particular, its null distribution deviates markedly from standard normal in
  high-dimension whenever $n/p  \leq 1$.}
\paragraph{Contribution}
We propose a correction for the dCRT, inspired by the decorrelation
method presented in \cite{ning_general_2017}.
The decorrelation step makes the null-distribution of the test statistics much
closer to standard normal, as shown on the right panel of
Figure~\ref{fig:motivation}, and thus increases the statistical power of the
method.
We provide asymptotic analysis of this method, which shows that CRT-logit
produces standard normal test-statistic in large-sample regime.
In addition, we validate the high performance of CRT-logit on
large-scale brain-imaging and genetics datasets, thus showing its usefulness in
practical applications.
\paragraph{Related works}
The closest cousin of the Conditional Randomization Test is Knockoff Filter
\citep{barber_controlling_2015,candes_panning_2018}, a recent breakthrough in
the False Discovery Rate (FDR) control literature.
It relies on the creation of additional noisy features, called knockoffs, to
calculate variable-importance statistics.
Another extension of vanilla CRT is the Holdout Randomization Test
(HRT) \citep{tansey_holdout_2018}.
While still requiring multiple sampling of noisy variables, HRT solves the
computational issue of original CRT by doing heavy model fitting only once on
one part of the dataset, and test statistics calculation on the other part,
without refitting the model.
However, this method relies on sample-splitting, hence inherently suffers from
a loss of statistical power.
A parallel line of work has introduced the Conditional Permutation Test (CPT)
\citep{berrett_conditional_2019}, a non-parametric alternative to CRT that
relies on a random shuffling mechanism applied to original variables, instead
of multiple sampling of new variables.
This potentially makes CPT more robust to model mis-specification.
\cite{yadlowsky_sloe_2021} recently proposed a method called SLOE, which adapts
the analysis of \cite{zhao_asymptotic_2020}, but in the regime different from
what we are considering, where
$\lim_{n, p \to \infty} n/p \to \kappa \in (1 , 2)$, and more importantly
without sparsity-inducing penalty.
On a separate note, we notice the similarity of dCRT \citep{liu_fast_2020} with
debiased Lasso
\citep{javanmard_confidence_2014,van_de_geer_asymptotically_2014,zhang_confidence_2014}.
This line of work proposed a debiasing formula for the estimator, which makes 
the asymptotic distribution of $(\betalasso - \bm\beta^0)$ standard normal, so that
one can compute the test statistic and p-value associated with each variable.
%
\section{Background}
\label{sec:background}

\paragraph{Notation}
We denote matrices, vectors, scalars and sets by bold uppercase, bold lowercase, script
lowercase , and calligraphic letters, respectively, \eg $\bX$, $\bx$, $x$, and $\cX$.
The $i$-th row of a matrix $\bX$ will be denoted $\bX_{i, *} \ $, the $j$-th
column $\bX_{*, j}$ and the $(i, j)$-th element $\bX_{i, j}$.
For any natural number $p$, we denote the set $[p] \egaldef \{1, \dots, p\}$.
For each $\bx \in \mathbb{R}^p$ and $j \in [p]$, we denote
$\bx_{-j} \egaldef \{x_1, x_2, \dots, x_{j-1}, x_{j+1}, \dots, x_p\}$ a $p-1$
dimensional observation after removing the $j$-th variable.
Correspondingly, $\bX_{-j}$ is the data matrix $\bX \in \bbR^{n\times p}$ with
column $\bX_{*, j}$ removed.
The cumulative distribution function (CDF) of the standard Gaussian
distribution will be denoted $\Phi(\cdot)$.
The indicator function of a random event $\cA$ will be denoted
$\textbf{1}_{\cA}$.
For any two positive sequences $x_n$ and $y_n$, we write $x_n \asymp y_n$ if
$cy_n \leq x \leq Cy_n$ for all $n$, for some positive constants $c$ and
$C$.
For a vector $\bx$, $\norm{\bx}_p$ denotes its $\ell_p$ norm.
For a function $f: \bbR^p \to \bbR$, $\nabla_jf$ denotes its gradient \wrt the
$j$-th variable, for $j \in [p]$.
\paragraph{Problem setting}
We consider exclusively binary classification, where the response vector is
denoted $\by \in \{0, 1\}^n$ and the data matrix $\bX\in \mathbb{R}^{n \times p}$
consists of $n$ $p$-dimensional samples.
Throughout the paper, we assume the data $\{\bX_{i, *}\}_{i=1}^n$ are \iid and
follow a distribution with zero mean and population covariance matrix $\bm\Sigma$.
Moreover, we assume that $\bX_{i, *}$ and $\by_i$ follow the logistic
relationship in Eq.~\eqref{eq:logit}.
We denote the support set $\cS \egaldef \{j \in [p]: {\bm \beta}^0_j \neq 0 \}$ and
assume that it is sparse, \ie $\card(\cS) = s^* \ll p$, where $\card$ denotes the
cardinality of a set.
Furthermore, $\hat{\cS} \egaldef \{j \in [p]: {\bm \hat{\beta}}_j \neq 0 \}$
indicates an estimation of $\cS$, where ${\bm \hat{\beta}}_j$ is an estimate of the
true signal ${\bm \beta}_j^0$.
We try to obtain it through a $\ell_1$-penalized logistic estimator:
\begin{align}
  \label{eq:mle} \betamle
  &= \argmin_{\bm\beta \in \mathbb{R}^{p}} \ \ell(\bm\beta) + \lambda
    \norm{\bm\beta}_1 \ , \ 
  \text{with} \ \ell(\bm\beta) = 
    -\dfrac{1}{n} \sum_{i=1}^n \left\{y_i(\bX_{i, *} \bm\beta) -
    \log\left[1 + \exp(\bX_{i, *}\bm\beta)\right] \right\} \ .
\end{align}
We denote $\bI \egaldef \bbE_{\betatrue} [\nabla^2 \ell(\bm\beta^0)]$ the 
Fisher information matrix, and $\bI_{j \mid -j}$ the 
partial Fisher information, defined by
$\bI_{j \mid -j} \egaldef
\bbE[\nabla^2_{jj} \ell(\bm\beta^0) -
[\nabla^2_{j,-j}\ell(\bm\beta^0)]^{\top}[\nabla^2_{-j,-j}
\ell(\bm\beta^0)]^{-1}\nabla^2_{-j,j}\ell(\bm\beta^0)] = \bI_{jj} -
\bI_{j,-j}\bI_{-j,-j}^{-1}\bI_{-j,j} \ , $
where $\bI_{j, -j}$ is the row-vector made with the $j$th-row and the columns
corresponding to $\bm\beta_{-j}$, $\bI_{-j, -j}$ the sub-matrix of $\bI$ made
with the rows and columns corresponding to $\bm\beta_{-j}$.
This quantity, defined following \citet[pp. 323]{cox_theoretical_1979},
plays an important role in our proposed method, detailed in
Section~\ref{sec:algo}.
\paragraph{Statistical control with False Discovery Rate}
To quantify statistical errors, we consider the \emph{False Discovery
  Rate}, introduced in \cite{benjamini_controlling_1995}.
Given an estimate of the support $\hat{\cS}$, the false discovery proportion
(FDP) is the ratio of the number of selected features that do not belong to the
true support $\cS$, divided by the total number of selected features.
The False Discovery Rate is the expectation of the FDP:
\begin{align*}\label{eq:fdp-fdr}
  \mathrm{FDP}(\hat{\cS})
  = \frac{\card(\{j: j \in \hat{\cS}, j \notin S\})}{\card(\hat{\cS}) \vee 1 } 
  \qquad \text{and} \qquad 
  \mathrm{FDR}(\hat{\cS})
  = \bbE[\mathrm{FDP} (\hat{\cS}) ].
\end{align*}
\paragraph{Conditional Randomization Test (CRT) and Distillation CRT (dCRT)}
The concept of Conditional Randomization Test was originally proposed in the
model-X knockoff paper \citep{candes_panning_2018} as a way to output valid
empirical p-values using knockoff variables.
The principle of the knockoff filter is first to sample noisy copies
$\tilde{\bX}_{*, j}$ of variable $\bX_{*, j}$, given a known sampling mechanism
$P_{j \ \mid -j}$.
One advantage of the knockoff filter is that no specific assumption is placed on
the distribution of the inferred test statistic.
However, this means that, in general, there is no mechanism to derive
p-values from the knockoff statistic.
This motivates the introduction of CRT, which requires running
high-dimensional inference for each variable $j$  $B$ times.
However, the computation cost of CRT is prohibitive when $p$ grows large:
assuming that we use the Lasso program with coordinate descent to compute
$T_j^{\text{CRT}}$, its runtime would be $\cO(Bp^4)$
\cite[pp.~93]{hastie_elements_2009}.
Moreover, CRT requires decently large $B$ to make the empirical distribution of
the p-values smooth enough.
Reducing the computational cost of CRT is the main motivation of several works
\citep{berrett_conditional_2019,liu_fast_2020,tansey_holdout_2018}.
One of them is the introduction of distillation-CRT (dCRT) by 
\citet{liu_fast_2020}.
The main appeal of this method is that it can output p-values analytically,
therefore bypassing the multiple knockoffs sampling steps used to infer on each
variable, and leads to a reasonable reduction of the computation cost.
\paragraph{Distillation operation}
The key addition of dCRT is the distillation operation: for each variable $j$,
we want to distill all the conditional information of the remaining variables
$\bX_{-j}$ to $\bx_j$ and to $\by$ via least-squares minimization with
$\ell_1$-regularization to enforce sparsity.
For each variable $j$, we first solve the lasso problem by regressing
$\bX_{*,j}$ on $\mathbf{X}_{-j}$,
\begin{equation}
  \label{eq:beta-dx}
  \betadx
  = \argmin_{\bm\beta \in \mathbb{R}^{p-1}}
  \dfrac{1}{2} \norm{\bX_{*,j} - \bX_{-j}\bm\beta}_2^2 + \lambda_{dx}\norm{\bm\beta}_1 .
\end{equation}
For distillation of variable $j$ and the binary response $\by$ with logistic
relationship, \cite{liu_fast_2020} briefly suggested to solve a penalized
estimation problem, similar to Eq.~\eqref{eq:mle}:
\begin{equation}
  \label{eq:beta-dy}
  \betady
  = \argmin_{\bm\beta \in \mathbb{R}^{p-1}} -\dfrac{1}{n} \sum_{i=1}^n \left\{y_i(\bX_{i, -j}^{\top}
    \bm\beta) -  \log\left[1 + \exp(\bX_{i, -j}^T\bm\beta)\right]
  \right\} + \lambda \norm{\bm\beta}_1.
\end{equation}
Then, a test statistic is calculated for each $j = 1, \dots, p$:
\begin{equation}
  \label{eq:test-score-dcrt}
  T_j = 
  \sqrt{n} \ \dfrac{\inner{\by - \bX_{-j}\betady, \bX_{*, j} - \bX_{-j}\betadx }}{
    \lVert {\mathbf{y} - \bX_{-j} \betady} \rVert_2
    \lVert {\bx_j - \bX_{-j}\betadx} \rVert_2
  },
\end{equation}
which, under the null hypothesis, and more importantly, assuming linear
relationship between $\bX_{i, *}$ and $\by$, follows standard normal
distribution asymptotically, conditional to $\by$ and $\bX_{-j}$.
It then follows that we can output a p-value for each variable $j$ by taking
$\hatP_j = 2\left[1 - \Phi \left( T_j \right) \right]$.

However, the formulation of test statistics in Eq.~\eqref{eq:test-score-dcrt}
is not truly satisfactory in the setting of sparse logistic regression.
More specifically, both the calculation of regression residuals
$\by - \bX_{-j}\betady$ and test statistics $T_j$ \emph{do not take into
  account the non-linear relationship} between $\bX$ and the binary response
$\by$.
The first row of Figure~\ref{fig:qqplot} plots the qq-plot of the test
statistics $T_j$ for logistic regression, which shows that even in the
classical regime where $n > p$, its distribution is far from standard normal.
\section{Decorrelating Test-Statistics for High-Dimensional Logistic Regression}
\label{sec:algo}
As we have elaborated, the formulation of dCRT is not well-suited for problems
other than penalized least-squares regression.
We therefore propose an adaptation of dCRT in the case of logistic regression,
inspired by the classical work of \cite{cox_theoretical_1979} and by
\cite{ning_general_2017}.
First, note that when testing $H_0^j: \beta^0_j = 0$ under the case where
$n > p$, we have the classical Rao's test statistic, defined by
\begin{equation}
  \label{eq:rao-test}
  T_j^{\text{Rao}} = \sqrt{n} \, \hat{\bI}_{j \mid -j}^{-1/2} \,
  \nabla_{j}\ell( 0 , \hat{\bm\beta}_{-j}) \, ,
\end{equation}
where
$\nabla_{j}\ell( 0 , \hat{\bm\beta}_{-j}) \egaldef
\left.{\nabla_{\beta_j}\ell(\beta_j , \hat{\bm\beta}_{-j})}\right|_{\beta_j=0}$
is the Fisher score.
Here
$\hat{\bm\beta}_{-j} \egaldef \argmin_{\bm\beta_{-j} \in \bbR^{p-1}}
\ell(\beta_j , \bm\beta_{-j})$ is the constrained maximum-likelihood estimator
of $\bm\beta_{-j}$ with fixed $\beta_j$, and $\hat{\bI}_{j \mid -j}$ is a
consistent estimator of the partial Fisher information $\bI_{j \mid -j}$ \ .
The appearance of the term $\hat{\bI}_{j \mid -j}^{-1/2}$ is due to the fact
that under the null hypothesis $H_0^j$, we have, by \citet[Chapter
9]{cox_theoretical_1979}, and by \citet{rao_large_1948},
\begin{equation*}
  \sqrt{n} \nabla_{j}\ell(0 , \hat{\bm\beta}_{-j}) \xrightarrow[n \to \infty]{(d)}
  \cN(0, \bI_{j \mid -j}) 
  \, ,
\end{equation*}
which makes the asymptotic distribution of $T_j^{\text{Rao}}$ standard normal.
However, in the high-dimension case, where $n < p$, we do not reach
this convergence in distribution.
To see this, consider the Taylor expansion of the Fisher score of variable $j$
around any given estimator $\widetilde{\bm\beta}_{-j}$ of the true
$\bm\beta^0_{-j}$:
\begin{equation}
  \label{eq:taylor-score}
  \nabla_{j}\ell( 0 , \widetilde{\bm\beta}_{-j}) 
  = \nabla_{j}\ell( 0 , \bm\beta_{-j}^0) 
  + \nabla^2_{j , -j} \ell( 0, \bm\beta_{-j}^0)(\widetilde{\bm\beta}_{-j} -
  \bm\beta_{-j}^0) + \cO \left((\widetilde{\bm\beta}_{-j} -
  \bm\beta_{-j}^0)^2\right)
\end{equation}
On the right-hand side, the first term converges weakly to a normal
distribution due to the Central Limit Theorem, the remainder term becomes
negligible using $\ell_1$ regularization to induce sparsity, but the second
term does not, due to estimation bias and sparsity effect of
$\widetilde{\bm\beta}_{-j}$ \citep{fu_asymptotics_2000}.
\paragraph{Adapting distillation operation for sparse logistic regression}
Fortunately, Eq.~\eqref{eq:taylor-score} suggests that for each variable $j$,
we can \emph{debias} the Fisher score by correcting the impact of other terms.
In particular, for each variable $j$, we replace the Fisher score by
\begin{equation}
  \nabla_{j}\ell(0, \bm\beta_{-j}) -
  \bI_{j, -j}\bI_{-j, -j}^{-1} \nabla_{-j}\ell(0, \bm\beta_{-j}) \
  .
\end{equation}
The inversion of the large matrix $\bI_{-j, -j} \in
\bbR^{(p-1)\times(p-1)}$ is computationally prohibitive, but we can
estimate the term $\bI_{j, -j}\bI_{-j, -j}^{-1}$ straightforwardly by
solving
\begin{equation}
  \label{eq:estimate-w}
  \hat{\bw}^j 
  = \argmin_{\bw \in \bbR^{p-1}} \dfrac{1}{2n} \sum_{i=1}^n 
  \left[ \nabla^2_{j, -j} \ell_i(\hat{\bm\beta})
   - \bw^T \nabla^2_{-j,-j} \ell_i(\hat{\bm\beta}) \right]^2 + \lambda \norm{\bw}_1, 
\end{equation}
for each variable $j$, where $\hat{\bm\beta}$ is given with Eq.~\eqref{eq:mle}.
Moreover, since we have the closed-form of the derivatives of the logistic loss
$\ell(\hat{\bm\beta})$, a simple derivation from Eq.~\eqref{eq:estimate-w}
suggests the following $x_j$-distillation operation, \emph{adapted for logistic
  regression}:
\begin{equation}
  \label{eq:scaled-betadx} \betadx 
  = \argmin_{\bm\beta \in \mathbb{R}^{p-1}} \dfrac{1}{n} \sum_{i=1}^n
  g''(\bX_{i, *}\hat{\bm\beta}) (\bX_{i,j} - \bX_{i, -j}\bm\beta)^2 + \lambda_{dx}\norm{\bm\beta}_1,
\end{equation}
where the extra term (second-order derivative of the sigmoid function)
$g''(\bX_{i, *}\hat{\bm\beta}) = \frac{\exp{(\bX_{i, *}\hat{\bm\beta})}}{[1 +
  \exp{(\bX_{i, *}\hat{\bm\beta})}]^2}$ appears from differentiating twice the
loss function $\ell(\hat{\bm\beta})$, and $\hat{\bm\beta} = \betamle$ is
defined in Eq.~\eqref{eq:mle}.
On the other hand, we can obtain $\betady_{j}$ from $\hat{\bm\beta}$ by simply
omitting the $j$-th coefficient from it, \ie
\begin{equation*}
\betady \egaldef (\hat{\beta}_1, \hat{\beta}_2, \dots,
  \hat{\beta}_{j-1}, \hat{\beta}_{j+1}, \dots, \hat{\beta}_p) \ .
\end{equation*}

Finally, the equation for decorrelated test score, adapted from both
Eq.~\eqref{eq:test-score-dcrt} and \eqref{eq:rao-test}, reads
\begin{equation}
  \label{eq:decorr-score} T_j^{\text{decorr}} =
- \frac{1}{\sqrt{n}} \ \hat{\bI}_{j \mid -j}^{-1/2} \ \sum_{i=1}^n \left[y_i - g(\bX_{i,-j}\betady)\right] \left[\bX_{i,j}
-\bX_{i,-j} \betadx 
\right],
\end{equation}
where the formula for empirical partial Fisher information is
  $\hat{\bI}_{j \mid -j} = n^{-1} \sum_{i=1}^n
  g''(\bX_{i, *}\hat{\bm\beta}) (\bX_{i,j} - \bX_{i, -j} \ \betadx
  ) \ \bX_{i,j}.$
A summary of the full procedure, which we call CRT-logit, can be found in
Algorithm~\ref{alg:CRT-logit}.
Notice that the runtime of CRT-logit is the same as dCRT, which means in
general slower than KO and HRT.
To speedup inference time, we introduce a variable-screening step that
eliminates potentially unimportant variables before distillation, similar to
dCRT.
We provide empirical benchmark of runtime of each method in
Section~\ref{ssec:runtime}.
\paragraph{Setting $\ell_1$-regularization parameter $\lambda$ and $\lambda_{dx}$}
In general, we advise to use cross-validation for obtaining $\betamle$ in
Eq.~\eqref{eq:mle} and for $\bX_{*, j}$-distillation operator, as defined by
Eq.~\eqref{eq:scaled-betadx}.
This is inline with the theoretical argument for dCRT \cite[Lemma 1 and Theorem
3]{liu_fast_2020}.
However, we also observe empirically that choosing the $\ell_1$-regularization
parameters of the distillation step can strongly affect how variables are
selected by CRT-logit.
We provide more details Appendix~\ref{ssec:setting-l1}, and leave further
theoretical investigations of this phenomenon for future work.
\begin{algorithm}[h]
  \small
  \SetAlgoLined
  {\textbf{INPUT} design matrix $\bX \in \bbR^{n \times p}$, reponses $\by \in \bbR^n$} \\
{\textbf{OUTPUT} vector of p-values $\{p_j\}_{j=1}^p$};  \\
$\betamle \gets \texttt{solve\_sparse\_logistic\_cv}(\bX, \by
)$ \tcp{Using
  Eq.~\eqref{eq:mle} 
}
$\hat{\cS}^{\text{SCREENING}} \gets \{j: j \in [p], \hat{\beta}_j \neq 0\}$  \\
\For{$j \in \hat{\cS}^{\text{SCREENING}}$}{
  $\betadx
  \gets \texttt{solve\_scaled\_lasso\_cv}(\bX_{*,
      j}, \bX_{*,-j}
    )$
    \tcp{Using Eq.~\eqref{eq:scaled-betadx}
    }
  $\betady
    \gets (\hat{\beta}_1, \hat{\beta}_2, \dots,
    \hat{\beta}_{j-1}, \hat{\beta}_{j+1}, \dots, \hat{\beta}_p)$
    
  $T_j^{\text{decorr}} \gets \texttt{decorrelated\_test\_score}(j, \bX,\by,
    \betadx
    , \betady
    )$ \tcp{Using Eq~\eqref{eq:decorr-score}}
  $\hatP_j \gets 2[1 - \abs{\Phi\left( T_j^{\text{decorr}} \right)}]$
  }
  \For{$j \notin \hat{\cS}^{\text{SCREENING}}$}{
    $\hatP_j = 1$
    }
\caption{CRT-logit}
\label{alg:CRT-logit}
\end{algorithm}
\paragraph{Asymptotic analysis of the Decorrelated Test Statistic}
We now provide theoretical analysis of CRT-logit in large-sample regime.
All the proofs can be found in Appendix~\ref{sec:proofs}.
Without writing it explicitly, in our analysis, we consider $p = p(n)$, and the
following assumption.
\begin{assn}[Regularity conditions]
  \label{astn:regularity-glm}
  Assume that
  \begin{enumerate}[label=(A\arabic*)]
  \item $\lambda_{\min}(\bI) \geq \kappa^2$ for some constant $\kappa > 0.$
  \item Sparsity of $\betatrue$ and $\wjtrue$, with $\wjtrue$ the ground truth
    weights for the distillation of $\bx_j$ in Eq.~\eqref{eq:scaled-betadx}:
    $\abs{\cS} =s^*$ and $\norm{\wjtrue}_0 = s'$ with
    $s^* = o \bigl( n^{1/2}/ \log (p) \bigr)$ and
    $s' = o \bigl( n^{1/2} / \log (p) \bigr)$.    
  \item For all $i \in [n]$, $\bX_{i, *}$ and $(-y_i + g'(\bX_{i,*}\beta))$
    are sub-exponential random variables, and
    $\abs{\bX_{i, -j}\wjtrue} \leq K$ almost surely, for some constant $K$.
  \end{enumerate}
\end{assn}
We then have the following result, that states that the asymptotic distribution of
the decorrelated test scores is standard normal.
\begin{theo}
  \label{thm:clt-crt-logit}
  Let $j \in [p]$, and let $T_j^{\text{decorr}}$ be defined as in
  Eq.~\eqref{eq:decorr-score}, with
  $\lambda \asymp \lambda_{dx} \asymp \sqrt{n^{-1}\log (p) }$.
  Then, if Assumption~\ref{astn:regularity-glm} holds true, under the null
  hypothesis $\cH_0^j: \beta^0_j = 0$, we have
  \begin{align*}
    \forall t \in \bbR \, , \qquad 
    \lim_{n\to\infty}\abs{\bbP_{\beta^0}(T_j^{\text{decorr}} \leq t) - \Phi(t)}
    &= 0 \ ,
  \end{align*}
where $\Phi(\cdot)$ is the CDF of the standard Gaussian distribution.
Moreover, for each $j \in [p]$, if we define
$\hatP_j \egaldef 2\left[1 - \Phi\left( T_j \right) \right]$ , \ie
$\hatP_j$ is the output of Algorithm~\ref{alg:CRT-logit}, then, under the
null hypothesis $\cH_0^j: \beta^0_j = 0$, we have
\begin{equation*}
  \limsup_{n \to \infty} \bbP_{\beta^0}(\hatP_j \leq t) \leq t
  \quad \text{for all } t \in [0, 1] \, ,
\end{equation*}
that is, the p-values output by Algorithm~\ref{alg:CRT-logit} are valid
asymptotically.
\end{theo}
\paragraph{FDR control with CRT-logit}
We now state the second main result, which establishes that the FDR of the test
is controlled when using Benjamini-Yekutieli procedure
\citep{benjamini_control_2001} with the p-values output from
\Cref{alg:CRT-logit}, assuming that the number of tests $p$ is fixed.
\begin{theo}
  \label{thm:asymp-fdr-control-crt-logit}
  Under Assumptions~\ref{astn:regularity-glm} and logistic model defined in
  Eq~\eqref{eq:logit}, with
  $\lambda \asymp \lambda_{dx} \asymp \sqrt{n^{-1}\log (p) }$, assume
  $n^{-1/2}(s' \vee s^*)\log (p) = o(1)$, and assume the number of tests $p$ is
  fixed.
  Let $\alpha \in (0,1)$ and $\Sbycrt$ be given by applying following the
  Benjamini-Yekutieli FDR-controlling procedure to the CRT-logit p-values
  $\{\hatP_j\}_{j \in [p]}$, output from Algo.\ref{alg:CRT-logit}.
  Then, we have
  \begin{equation*}
    \limsup_{n \to \infty} \bbE\left[\dfrac{\card (\Sbycrt \cap
          \cS^c)}{\card(\Sbycrt) \vee 1} \right] \leq \alpha
          \, . 
  \end{equation*}
\end{theo}
\begin{rk}
  Assumption~\ref{astn:regularity-glm} is also assumed in
  \cite{ning_general_2017,van_de_geer_asymptotically_2014}, which also provide
  a detailed discussion of this regularity assumption in generalized linear
  models.
  This assumption, in turn, is built on the regularity assumption in the
  classic work of \citet[Chapter 9]{cox_theoretical_1979} to establish asymptotic
  normality of Rao's test statistic.
  Theorem~\ref{thm:clt-crt-logit} is an adaptation of
  \citet[Theorem~3.1]{ning_general_2017}, specialized for the case of sparse
  logistic regression and the p-values output from CRT-logit.
\end{rk}
%

\section{Empirical Results}
\label{sec:empirical}
%
We provide benchmarks of the proposed CRT-logit algorithm along with most other
methods mentioned in the introduction, in particular: model-X Knockoff (KO)
\citep{candes_panning_2018}, Debiased Lasso (dLasso)
\citep{zhang_confidence_2014,javanmard_confidence_2014}, original CRT with 1000
samplings \citep{candes_panning_2018}, Holdout Randomization Test with 5000
samplings \citep{tansey_holdout_2018}, and Lasso-Distillation CRT (dCRT)
\citep{liu_fast_2020}.
We did not include SLOE \citep{yadlowsky_sloe_2021} and CPT
\citep{berrett_conditional_2019}, as the provided open-source implementation are
particularly unstable and do not fit in the sparse-regression setting (for
SLOE), or implementation are not available (for CPT).
\begin{rk}
  \label{rk:noise-term}
  As a slight caveat, in the simulated and semi-realistic experiment sections
  (Sections~\ref{ssec:qqplot}, \ref{ssec:simu-mildly-high} and
  \ref{ssec:simu-hcp}), we introduce an additional noise term to the logistic
  relationship of Eq.~\eqref{eq:logit}:
  \begin{equation}
    \label{eq:logit-noise}
    \bbP(y_i = 1 \mid \bx_i) = g(\bx_i^T\bm\beta^0 + \sigma\xi_i) \ , 
  \end{equation}
  where $\xi_i \sim \cN(0, 1)$ is a Gaussian noise and $\sigma > 0$ the noise
  magnitude.
  The formula in Eq.~\eqref{eq:logit-noise} has been used in previous works,
  \eg \citet{bzdok_semi-supervised_2015}.
  There is a clear justification to this: in most of the applications we consider,
  data are collected with measurement errors.
  In the case of brain-imaging, for example, recording the brain signal of the
  human subjects by scanners often includes noise caused either from the
  machine, or from the movement of the subjects, as elaborated by
  \citet{lindquist_statistical_2008}.
  Moreover, in general, this setting corresponds to a model mis-specification,
  which the CRT-logit is robust to under \Cref{astn:regularity-glm}, following
  the arguments as in \citet[Section 5]{ning_general_2017}.
\end{rk}
\begin{rk}
  We use Benjamini-Hochberg step-up procedure
  \citep{benjamini_controlling_1995} to control FDR with the p-values in all
  the empirical experiments in Section~\ref{ssec:simu-mildly-high} and
  App.~\ref{ssec:gwas}, as we observe empirically the FDR is usually controlled
  with this procedure, without compromising power with the conservative BY
  bound.
\end{rk}
\subsection{Effectiveness of the decorrelation step}
\label{ssec:qqplot}
To show how decorrelating the test statistics helps, we set up a simulation
with matrix $\bX$ of $p=400$ features and vary the number of samples
$n \in \{200,400,800,4000\}$.
The binary response vector $\by$ is created following
Eq.~\eqref{eq:logit-noise}, and
the design matrix $\bX$ is sampled from a multivariate normal distribution with
zero mean, while the covariance matrix $\bm\Sigma \in \bbR^{p \times p}$ is a
symmetric Toeplitz matrix, where the parameter $\rho \in (0, 1)$ controls the
correlation between neighboring features: correlation decreases quickly when
the distance between feature indices increases.
The true signal $\bm\beta^0$ is picked with a sparsity parameter
$\kappa = s^* / p$ that controls the proportion of non-zero elements with
magnitude 2.0, \ie $\beta_j = 2.0$ for all $j \in \cS$.
For the specific purpose of this experiment, non-zero indices of $\cS$ are
kept fixed.
The noise $\bm\xi$ is \iid normal $\cN(0, \textbf{Id}_n)$ with magnitude
$\sigma = \norm{\bX\bm\beta^0}_2 / (\sqrt{n} \ \text{SNR})$, controlled by the
SNR parameter.
In short, the three main parameters controlling this simulation are correlation
$\rho$, sparsity degree $\kappa$ and signal-to-noise ratio SNR.
We generate randomly 1000 datasets, and run dCRT and CRT-logit algorithm to
obtain one sample of test statistics $\{T_j\}_{j=1}^p$ and
$\{T_j^{\text{decorr}}\}_{j=1}^p$.
Then, we pick 1000 samples of one null test statistic $T_j$ and
$T_j^{\text{decorr}}$, defined in Eq.~\eqref{eq:test-score-dcrt}
and~\eqref{eq:decorr-score}, respectively, and plot the qq-plot of their
empirical quantile versus the standard normal quantile.
\begin{figure}[h]
  \centering
  \begin{subfigure}[c]{0.22\linewidth}
    \centering
    \includegraphics[width=0.8\textwidth]{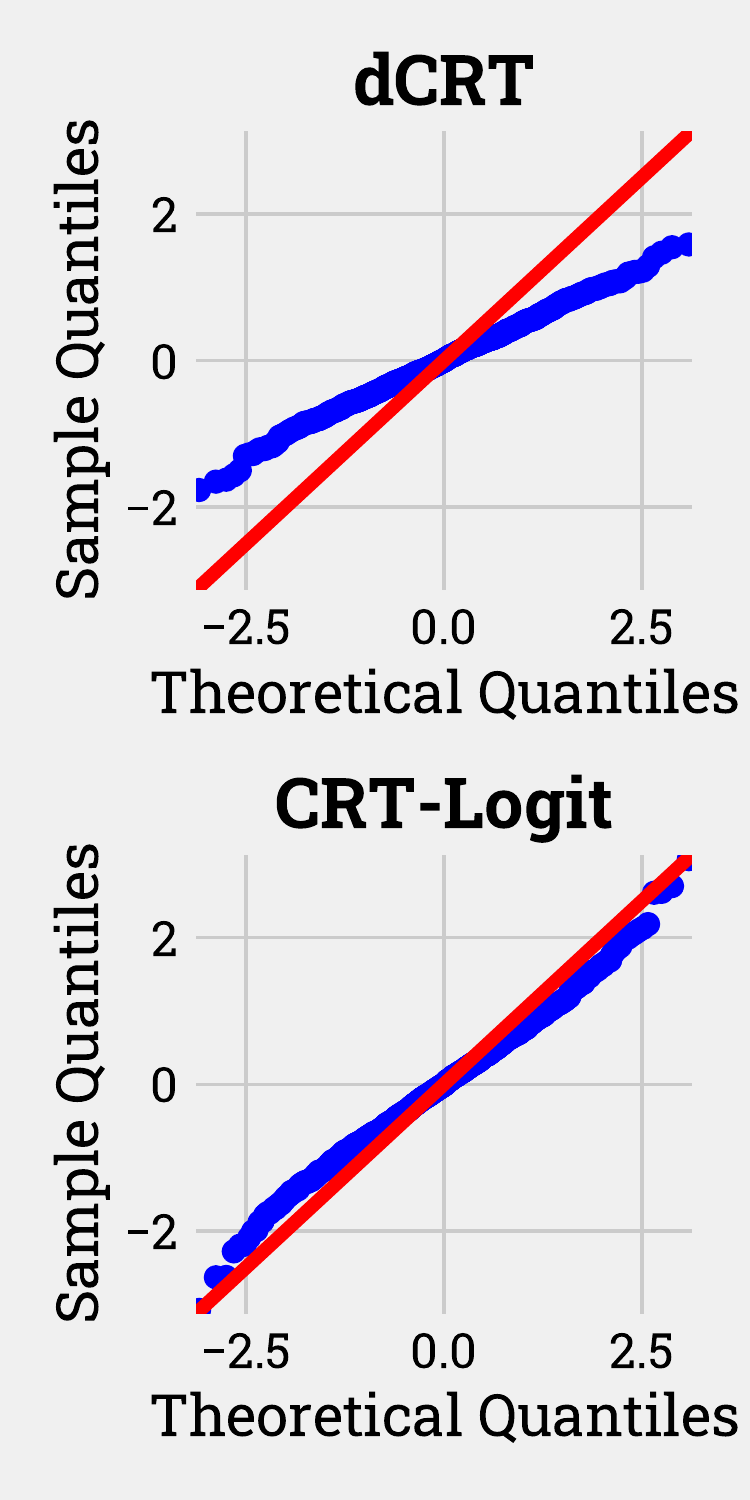}
    \caption{$\mathbf{n=200}$}
  \end{subfigure}
  \begin{subfigure}[c]{0.22\linewidth}
    \centering
    \includegraphics[width=0.8\textwidth]{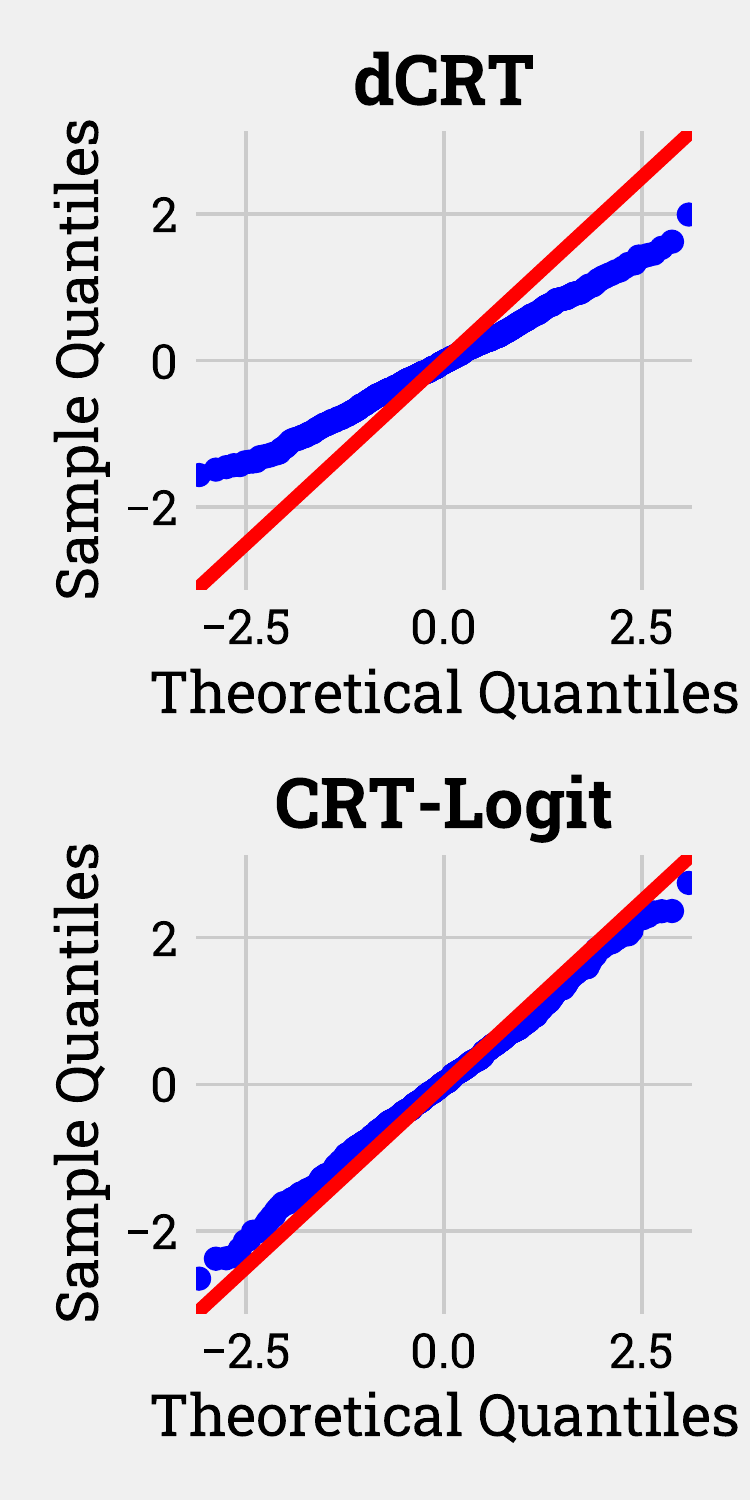}
    \caption{$\mathbf{n=400}$}
  \end{subfigure}
  \begin{subfigure}[c]{0.22\linewidth}
    \centering
    \includegraphics[width=0.8\textwidth]{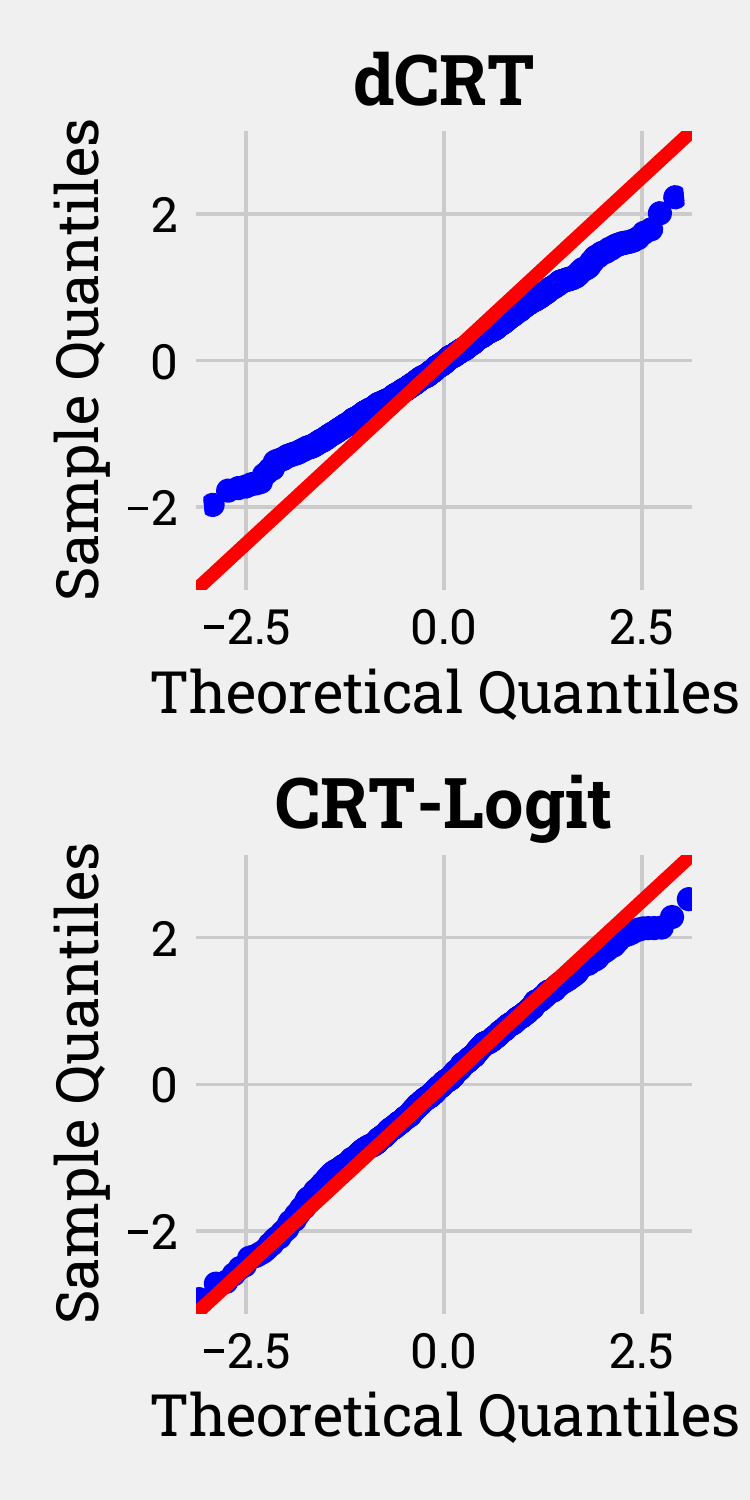}
    \caption{$\mathbf{n=800}$}
  \end{subfigure}
  \begin{subfigure}[c]{0.22\linewidth}
    \centering
    \includegraphics[width=0.8\textwidth]{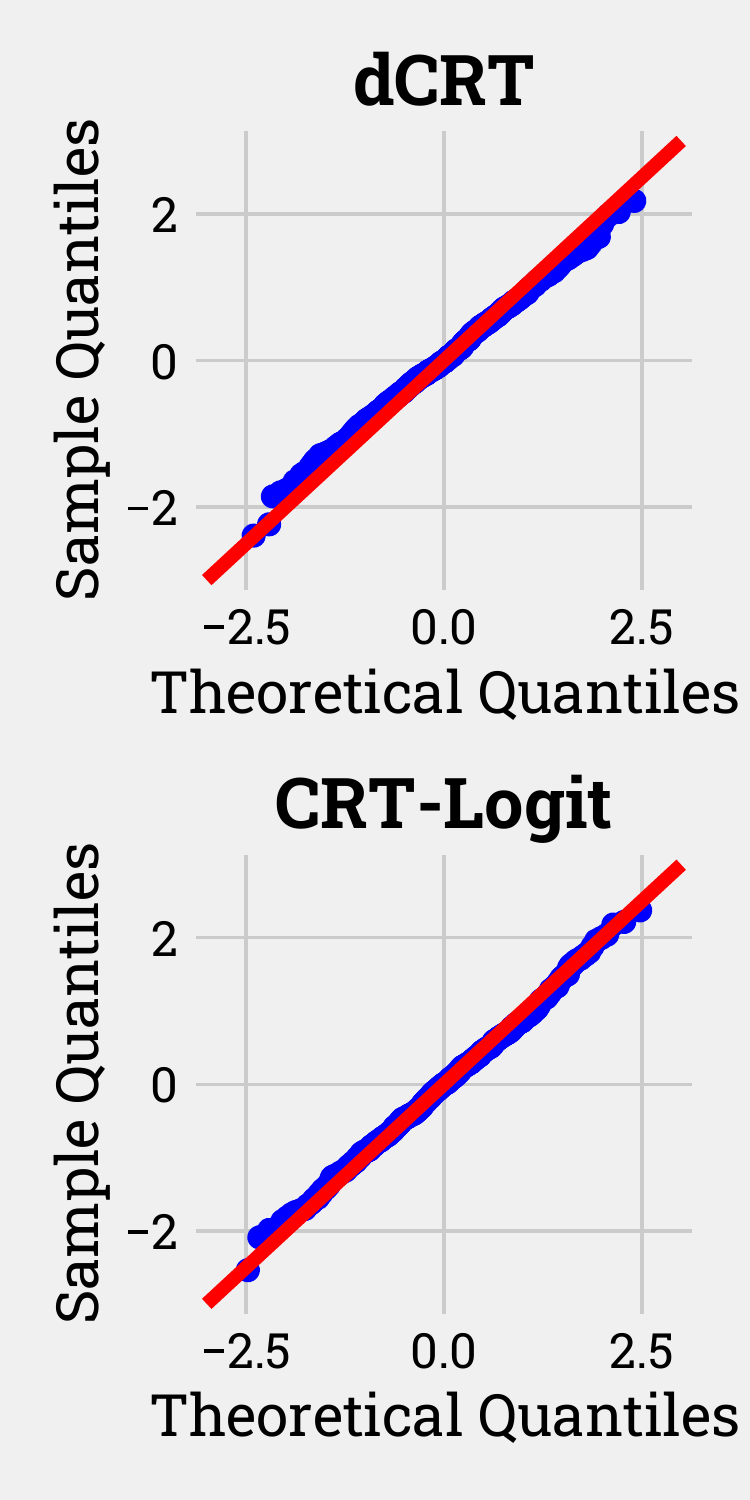}
    \caption{$\mathbf{n=4000}$}
  \end{subfigure}
  \caption{\textbf{QQ-Plot for one null CRT statistic for logistic regression,
      with varying number of samples and a fixed number of variables
      $p=400$}.
    %
    The theoretical quantiles are obtained from a standard Gaussian
    distribution.
    The decorrelation step makes the empirical null distribution of the null
    statistics much closer to standard Gaussian.
    Parameters: $\text{SNR}=3.0, \rho=0.4, \text{ sparsity}=0.06$. \emph{Upper
      row: Distilled-CRT statistic defined by Eq.~\eqref{eq:test-score-dcrt}.
      Bottom row: CRT-logit, with decorreleated test score defined by
      Eq.~\eqref{eq:decorr-score} \textbf{(ours)}.}  }
  \label{fig:qqplot}
\end{figure}
From the results in Figure~\ref{fig:qqplot}, we observe that the empirical null
distribution of the test statistic is \emph{much closer to a standard normal
when adding the decorrelation step.}
In particular, when the sample size $n$ increases to 400, the decorrelated test
statistic has empirical quantiles almost inline with the theoretical quantiles
of the standard normal distribution, while dCRT test score strays away from the
45-degree line.
Again, we emphasize that the normality of $T_j$ is crucial for the p-values
calculation.
This outlines the importance of the decorrelating step on $T_j$.
\subsection{High-dimensional scenario with varying simulation parameters}
\label{ssec:simu-mildly-high}
To see how each algorithm performs under different settings, we follow the same
simulation scenario as in \Cref{ssec:qqplot}, but vary each of the three
simulation parameters, while keeping the others unchanged at default value of
SNR $=2.0$, $\rho=0.5$, $\kappa=0.04$.
We target a control of FDR at level $0.1$, using Benjamini-Hochberg procedure.  
Results in Figure~\ref{fig:fdr-power-mild} show that CRT-logit is the most
powerful method while still controlling the FDR.
Moreover, in the presence of higher correlations between nearby variables
($\rho > 0.6$), other methods suffer a drop in average power, but this is not
as severe for CRT-logit.
The original CRT, in general, is conservative.
We believe that this is due to using only $B=500$ samplings to generate
empirical p-values for the two methods, due to prohibitive average runtime of
the algorithm with larger $B$ (which we provide in Section~\ref{ssec:runtime}).
For HRT, the conservativeness is expected, due to the usage of only half of the
sample for test-statistics calculation -- even though the number of samplings is
bigger than original CRT ($B=5000$).
We note that, perhaps surprisingly, the debiased lasso (\texttt{cdlasso}) is
the most conservative. It controls FDR well in all settings.
This might be due to the fact that \texttt{dlasso} also relies on the choice of
the $\ell_1$-regularization $\lambda$ in the nodewise Lasso operation, similar
to the $\bX_{*, j}$-distillation of dCRT, as noted in
Section~\ref{sec:intro-dcrt}.
What makes the difference is that instead of using cross-validation for
setting $\lambda$ for each variable $j$,
a \emph{fixed} value of $\lambda=10^{-2}\lambda_{max}$ is used in the
implementation of \texttt{dlasso}.
We strongly suspect this fixed value is not optimal, which makes the procedure
powerless.

\begin{figure}[h]
  \centering
  \includegraphics[width=0.5\columnwidth]{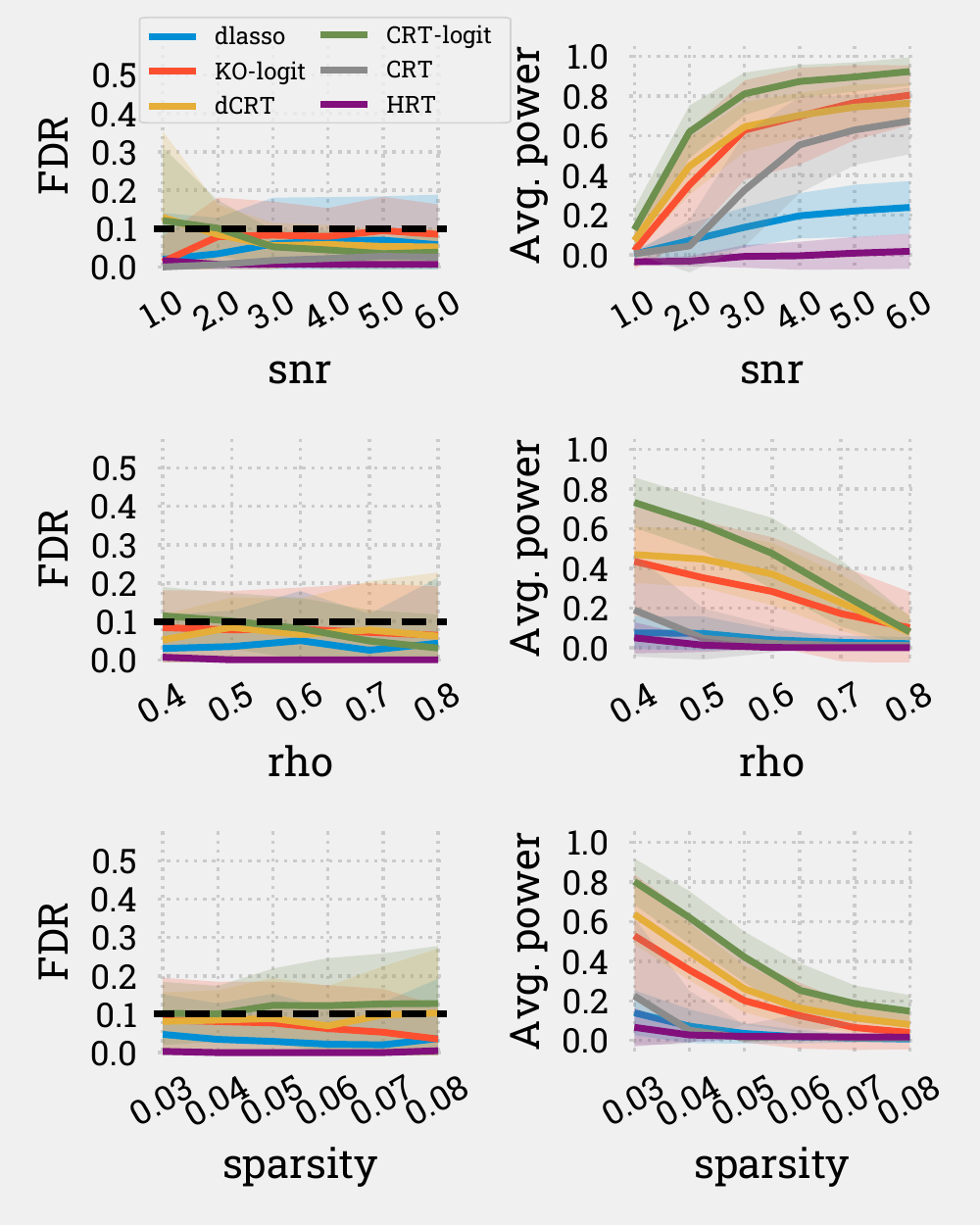}
  \caption{\textbf{FDR/Average Power of 100 runs of simulations across varying
      parameters in high-dimensional settings}. Default parameter:
    $n=400, p=600, \text{SNR}=2.0, \rho=0.5, \kappa=0.04$. FDR is controlled at
    level $\alpha=0.1$. Methods: Debiased Lasso (\texttt{dlasso}), model-X
    Knockoff (\texttt{KO-logit}), original CRT (\texttt{CRT}), HRT
    (\texttt{HRT}), dCRT (\texttt{dCRT}), and our version of CRT (dark green
    line -- \texttt{CRT-logit}).}
  \label{fig:fdr-power-mild}
\end{figure}

\subsection{Application: large-scale analysis on brain-imaging dataset}
\label{ssec:simu-hcp}
\paragraph{Description}
The Human Connectome Project dataset (HCP) is a collection of brain imaging data
on healthy young adult subjects with age ranging from 22 to 35.
More specifically, the input $\bX$ is a set of 2mm statistical maps of
400 subjects across 8 cognitive tasks.
These are called z-maps, as the data follow a standard normal distribution
under the null hypothesis.
Each task in turn features 2 different contrasts, which effectively form
binary responses $\by \in \{0, 1\}^n$.
We propose to fit $\by$ through distributed brain signals and identify relevant
brain locations.
The setting is high-dimensional with $n = 800$ samples, corresponding to 400
subjects, while the total number of variables is $p \approx 200,000$ brain
voxels.
Following \cite{nguyen_ecko_2019,chevalier_decoding_2021}, we use a
hierarchical clustering scheme to group the variables into $C = 1000$ spatially
connected clusters.
We provide details of the pre-processing step in
Appendix~\ref{sec:details-experiment}.
\paragraph{Creating semi-realistic ground-truth and response labels}
Since there is no ground truth for this dataset, we create synthetic true
signals by fitting the data $\bX$ and response $\by$ with an $\ell_1$-penalized
logistic classifier.
In other words, the estimator $\betalogreg$ will serve as true
regression coefficients for each task.
Then, to avoid bias in simulating label $\hat{\by}$, the z-maps matrix $\bX $
of one task are used in conjunction with the discriminative pattern map
$\betalogreg$ from the next task in the following order: \texttt{relational},
\texttt{gambling}, \texttt{emotion}, \texttt{social}.
For instance, we use $\betalogreg$ of \texttt{gambling} with z-maps data matrix
of \texttt{relational}, \ie for all $i = 1, \dots, n$, given
$\bx_{i, \texttt{relational}}$,
\begin{equation} \label{eq.simu-hcp.gener-yhat}
  \hat{y}_{i} \sim
  \text{Bern}\left\{g(\bx_{i, \texttt{relational}}^{\top} \ \betalogreg_{\texttt{gambling}}
    + \sigma\xi_i)\right\},
\end{equation}
where $\text{Bern}(a)$ is a Bernoulli probability mass function that takes a
value 1 with probability $a$, $\sigma$ is a noise magnitude and $\xi_i$ is a
standard normal noise.
Finally, we apply all inference algorithms on the semi-synthetic data
$(\bX, \hat{\by})$, and we evaluate their performance using the ground-truth
$\hat{\bm\beta}^{\text{logreg}}$.
This simulation setting is similar to \cite{chevalier_decoding_2021}, except
that here we consider a classification and not a regression problem.
It allows us to calculate the False Discovery Rate and average power with
multiple runs of the inference procedure (across tasks).
\begin{rk}
  The \iid assumption is formally violated in this experiment, where for each
  subject we analyze two sample image that are not
  independent.
  Yet, this remains a short-range correlation structure, and is thus not a strong
  challenge to the \iid assumption.
\end{rk}
\begin{figure}[h]
  \centering
  \includegraphics[width=0.8\textwidth]{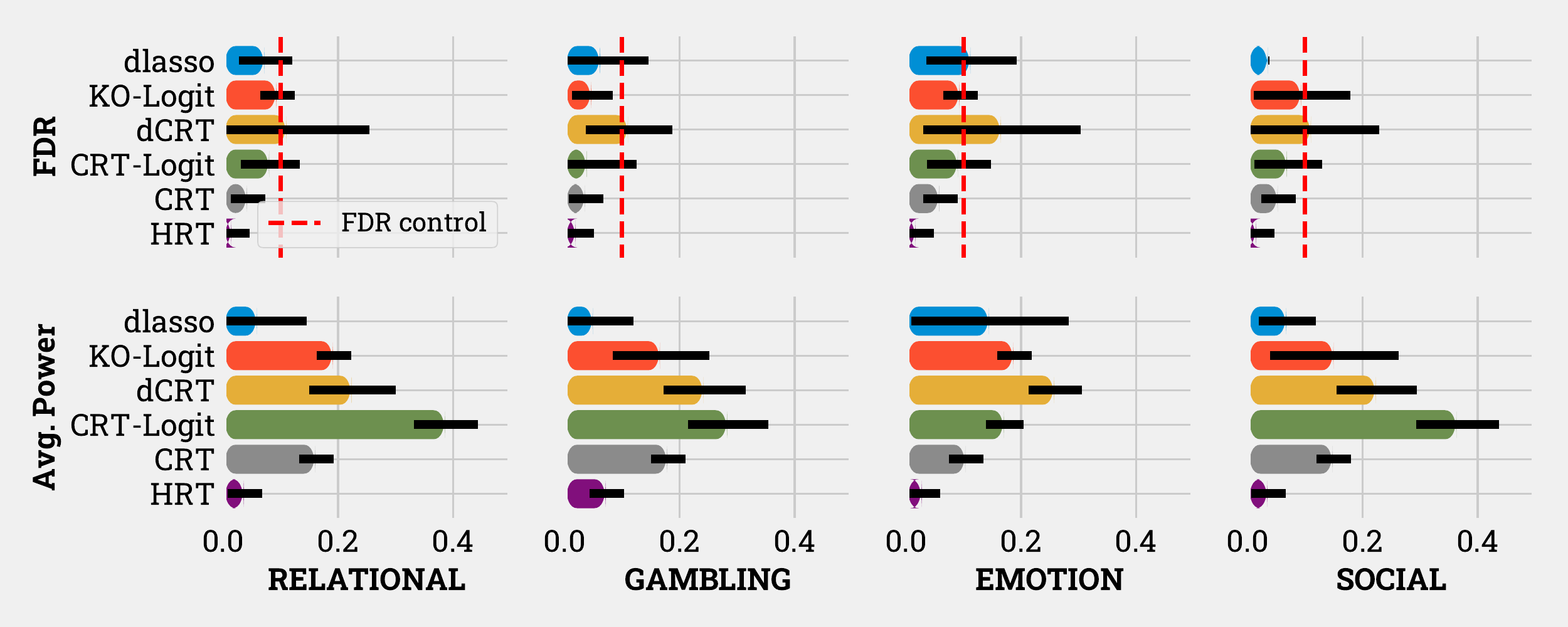}
  \caption{\textbf{FDR/Average Power of 50 runs of semi-realistic experiments
      on four tasks of Human Connectome Project dataset.}
    Parameters: $n=800$ (taken from 400 subjects), $\text{SNR}=2.0$.
    Methods (clustering versions): Debiased Lasso (\texttt{cdlasso}), model-X
    Knockoff (\texttt{cKO-logit}), original CRT (\texttt{CRT}), HRT
    (\texttt{HRT}), dCRT (\texttt{dCRT}), and our version of CRT (dark
    green line -- \texttt{CRT-logit}).}
  \label{fig:fdr-power-hcp}  
\end{figure}
\paragraph{Results}
Results in Figure~\ref{fig:fdr-power-hcp} show that CRT-logit
achieves a better recovery compared to KO or original CRT/dCRT/HRT, which
results in higher statistical power.
This gain comes with a good control of the FDR under desired level
$\alpha=0.1$.
On a related note, the only analysis where dCRT makes more discoveries than
CRT-Logit is in \texttt{emotion} task, but at the cost of failing to control
FDR at nominal level.

\subsection{Application:  genome-wide association study with
  Human Brain Cancer Dataset}
\label{ssec:gwas}
\paragraph{Description}
The last in our benchmark is a Genome-wide Association Study (GWAS) on the The
Cancer Genome Atlas (TCGA) dataset
\citep{weinstein_2013_cancer,vasaikar_linkedomics_2018}.
We choose to analyze the Glioma cohort, which consists of $n=1026$ patients
across a wide age range, diagnosed with this type of brain tumor, with a total
of $p=24776$ genes in the data matrix, recorded as copy number variations
(CNVs) at the gene level in log ratio format.
As with the brain-imaging inference in Section~\ref{ssec:simu-hcp}, we use
clustering to reduce the dimension to $C=1000$ clusters.
For the response, a long-term survivor (LTS) is defined as a patient who
survived more than five years after diagnosis and would be labeled $y=0$, and
any patient who died within five years would be a short-term survivor (STS),
labeled $y=1$.
The objective is to identify significant genes that contribute to
classification of the LTS/STS status.
Similar to the Human Connectome Project dataset, there is no real ground-truth
for the TCGA Glioma.
However, we have the list of mutations and the frequency of those detected in
the diagnosed patients.
We therefore select the 1000 most frequent gene mutations that appeared in this
list, \ie the ground truth list consists of 1000 genes (variables).
\begin{table}[h]
  \centering
  \small
  \caption{\textbf{List of detected genes associated with Glioma Cancer from
      the TCGA dataset $n=1026, p=24776$}. Empty line (---) signifies no
    detection. Methods listed in the table are the clustering version.
    Commonly detected genes between methods are put in bold text. Most detected
    genes are listed in the mutant list database that can be found in the
    recorded patients \citep{vasaikar_linkedomics_2018}.}
  \vskip 0.15in
\begin{tabular}{ll}
  \toprule
  \textbf{Methods}
         & \textbf{Detected Genes} \\
  \midrule
  dLasso & --- \\
  KO     & \textbf{ABCC10}, \textbf{ANK3}, CDH23, PTEN, \textbf{SPEN}, \textbf{SVIL}, ZMIZ1 \\
  dCRT  & \textbf{ANK3}, \textbf{ANKRD30A}, CDH23, PTEN, RET, \textbf{SPEN},ZMIZ1 \\
  CRT-logit & \textbf{ABCC10}, \textbf{ANKRD30A}, BCOR, EPHA3, PPL, SPAG17, \textbf{SPEN}, \textbf{SVIL}, USP9X\\
  Original CRT & \textbf{ABCC10}, BCOR, EPHA3, \textbf{SPEN}, \textbf{SVIL} \\
  HRT & \textbf{ABCC10}, \textbf{SPEN} \\
  \bottomrule
\end{tabular}
\vskip -0.1in
\label{table:detected-genes}  
\end{table}
\paragraph{Result }
The result from Table \ref{table:detected-genes} shows that CRT-logit finds
the largest number of genes.
Moreover, most of selected genes in this table are detected in the list of mutated
genes found on recorded patients.
Some genes are detected by all the benchmarked methods, most prominently
\texttt{SPEN}, which is found on over 10 \% of patients in the cohort.
Furthermore, this gene is known to be associated not only with  brain
cancer, but also with other types of cancer in The Human Protein Atlas project
\citep{legare_estrogen_2015}.
Note that, in the absence of a ground-truth, this does not guarantee all genes
found are associated with glioma, but this experiment demonstrates the
capability of CRT-logit in GWAS studies.

\subsection{Average runtime of benchmarked methods}
\label{ssec:runtime}
\begin{table}[h]
  \small
  \centering
  \caption{\textbf{Average runtime of benchmarked methods for one simulation
      (in seconds)}. Standard error is reported in parentheses.}
  \label{table:runtime}  
  \vskip 0.15in
\begin{tabular}{p{6cm}cc}
  \toprule
  \textbf{Methods} & \textbf{Simulated (Sec.~\ref{ssec:simu-mildly-high})}
 & \textbf{HCP-semi-real (Sec.~\ref{ssec:simu-hcp})}  \\
  \midrule
  Debiased Lasso
  \citep{zhang_confidence_2014,van_de_geer_asymptotically_2014,javanmard_confidence_2014}
                   & $61.83 \, (5.2)$ & $154.27 \, (8.79)$ \\
  Knockoff Filter \citep{barber_controlling_2015,candes_panning_2018}
                   & $1.62 \, (0.02)$ & $8.12  \, (0.62)$ \\
  CRT (500 samplings) \citep{candes_panning_2018} & $2312.91 \, (38.21)$ & $7069.96 \,
                                                                    (109.09)$ \\
  HRT (5000 samplings) \citep{tansey_holdout_2018} & $14.84 \, (2.01)$ & $52.17 \, (4.11)$\\      
  dCRT[screening=True] \citep{liu_fast_2020} & $16.83 \, (1.89)$ & $65.18 \, (3.91)$\\
  dCRT[screening=False] \citep{liu_fast_2020} & $370.12 \, (8.18)$ & $962.40 \, (20.63)$ \\  
  \textbf{CRT-logit[screening=True] (this work)} & $\textbf{14.16} \, \textbf{(0.35)}$ & $\textbf{61.26} \, \textbf{(3.55)}$ \\
  \textbf{CRT-logit[screening=False] (this work)} & $\textbf{367.91} \, \textbf{(4.11)}$ & $\textbf{983.78} \,  \textbf{(17.26)}$\\
  \bottomrule
\end{tabular}
\end{table}
Besides statistical performance, it is equally important to assess the
computational cost of inference procedures.
The average runtime in Table~\ref{table:runtime} from the two experiments shows
that the original CRT is not suitable for large-scale inference: it is over
2000 times slower than the fastest method (Knockoff Filter), and over 150 times
slower than dCRT/CRT-logit.
The empirical runtime also confirms the effectiveness of the screening step before
doing distillation/decorrelation of the test-statistics: the step makes
CRT-logit and dCRT 20 times faster than without.
On a related note, although in theory Debiased Lasso, dCRT and CRT-logit (both
without screening) share the same runtime complexity, the latter two are slower
due to the use of cross-validation to estimate the sparsity hyperparameter
$\lambda$ and $\lambda_{dx}$ (detailed in Section~\ref{sec:algo}).
%
\section{Discussion}
\label{sec:crt-discussions}
%
We proposed an adaptation of the Conditional Randomization Test (CRT) for
sparse logistic regression in the high-dimensional regime.
A major improvement of CRT-logit, our proposed algorithm, compared to original
CRT, comes from the decorrelation of test statistics to make their distribution
closer to standard normal.
Indeed, results from synthetic experiments in Figure~\ref{fig:qqplot} show
that in high-dimension (when $0.5 \leq n/p \leq 1.0$), the empirical null
distribution of CRT-logit's test statistic $T^{\text{decorr}}$ is much more
similar to a standard normal compared to the original CRT test statistic.
Moreover, empirical benchmarks in Section~\ref{sec:empirical} demonstrate that
CRT-logit performs better than related statistical inference methods, such as
the Debiased Lasso or Model-X Knockoffs.
In particular, CRT-logit is the most powerful method in our synthetic
experiment with high-dimensional datasets in
Section~\ref{ssec:simu-mildly-high}, while still keeping FDR controlled under
predefined level $\alpha=0.1$.

We note that there exists some limitations to CRT-logit.
The computational cost of CRT-logit, while lower than vanilla CRT, is still
larger than alternative methods such as Knockoff Filter and Holdout
Randomization Test.
Moreover, tuning the $\ell_1-$regularization $\lambda_{dx}$ parameter by
cross-validation, as is often done, can further increase the computational cost
of CRT-logit (and dCRT).

Despite this, our empirical benchmarks on both simulated and real data show
real promises of CRT-logit.
Henceforth, we believe CRT-logit is competitive for practical settings that
involve structured data, such as brain-imaging and genomics applications.

\subsection*{Acknowledgements}
BN, BT and SA acknowledged the support of the French "Agence Nationale de la
Recherche" under the project ANR-17-CE23-0011 (FastBig) and ANR-20-CHIA-0025-01
(KARAIB AI chair). BN was also supported by Chair DSAIDIS of T\'elecom Paris.

\bibliographystyle{apalike}
\bibliography{bibliography}

\begin{thebibliography}{}

\bibitem[Bach, 2010]{bach_self-concordant_2010}
Bach, F. (2010).
\newblock Self-concordant analysis for logistic regression.
\newblock {\em Electronic Journal of Statistics}, 4(none).

\bibitem[Bach et~al., 2012]{bach_optimization_2012}
Bach, F., Jenatton, R., Mairal, J., Obozinski, G., et~al. (2012).
\newblock Optimization with sparsity-inducing penalties.
\newblock {\em Foundations and Trends{\textregistered} in Machine Learning},
  4(1):1--106.

\bibitem[Barber and Cand\`es, 2015]{barber_controlling_2015}
Barber, R.~F. and Cand\`es, E.~J. (2015).
\newblock Controlling the false discovery rate via knockoffs.
\newblock {\em The Annals of Statistics}, 43(5):2055--2085.
\newblock arXiv: 1404.5609.

\bibitem[Benjamini and Hochberg, 1995]{benjamini_controlling_1995}
Benjamini, Y. and Hochberg, Y. (1995).
\newblock Controlling the {False} {Discovery} {Rate}: {A} {Practical} and
  {Powerful} {Approach} to {Multiple} {Testing}.
\newblock {\em Journal of the Royal Statistical Society. Series B
  (Methodological)}, 57(1):289--300.

\bibitem[Benjamini and Yekutieli, 2001]{benjamini_control_2001}
Benjamini, Y. and Yekutieli, D. (2001).
\newblock The control of the false discovery rate in multiple testing under
  dependency.
\newblock {\em Ann. Statist.}, 29(4):1165--1188.

\bibitem[Berrett et~al., 2020]{berrett_conditional_2019}
Berrett, T.~B., Wang, Y., Barber, R.~F., and Samworth, R.~J. (2020).
\newblock The conditional permutation test for independence while controlling
  for confounders.
\newblock {\em Journal of the Royal Statistical Society: Series B (Statistical
  Methodology)}, 82(1):175--197.
\newblock \_eprint:
  https://rss.onlinelibrary.wiley.com/doi/pdf/10.1111/rssb.12340.

\bibitem[Bzdok et~al., 2015]{bzdok_semi-supervised_2015}
Bzdok, D., Eickenberg, M., Grisel, O., Thirion, B., and Varoquaux, G. (2015).
\newblock Semi-{Supervised} {Factored} {Logistic} {Regression} for
  {High}-{Dimensional} {Neuroimaging} {Data}.
\newblock In Cortes, C., Lawrence, N., Lee, D., Sugiyama, M., and Garnett, R.,
  editors, {\em Advances in {Neural} {Information} {Processing} {Systems}},
  volume~28. Curran Associates, Inc.

\bibitem[Cand\`es et~al., 2018]{candes_panning_2018}
Cand\`es, E., Fan, Y., Janson, L., and Lv, J. (2018).
\newblock Panning for gold: ‘model-x’ knockoffs for high dimensional
  controlled variable selection.
\newblock {\em Journal of the Royal Statistical Society: Series B (Statistical
  Methodology)}, 80(3):551--577.

\bibitem[Chevalier et~al., 2021]{chevalier_decoding_2021}
Chevalier, J.-A., Nguyen, T.-B., Salmon, J., Varoquaux, G., and Thirion, B.
  (2021).
\newblock Decoding with confidence: {Statistical} control on decoder maps.
\newblock {\em NeuroImage}, 234:117921.

\bibitem[Cox and Hinkley, 1979]{cox_theoretical_1979}
Cox, D.~R. and Hinkley, D.~V. (1979).
\newblock {\em Theoretical statistics}.
\newblock CRC Press.

\bibitem[Fu and Knight, 2000]{fu_asymptotics_2000}
Fu, W. and Knight, K. (2000).
\newblock Asymptotics for lasso-type estimators.
\newblock {\em The Annals of statistics}, 28(5):1356--1378.

\bibitem[Hastie et~al., 2009]{hastie_elements_2009}
Hastie, T., Tibshirani, R., and Friedman, J. (2009).
\newblock {\em The elements of statistical learning: data mining, inference and
  prediction}.
\newblock OCLC: 1058138445.

\bibitem[Javanmard and Montanari, 2014]{javanmard_confidence_2014}
Javanmard, A. and Montanari, A. (2014).
\newblock Confidence intervals and hypothesis testing for high-dimensional
  regression.
\newblock {\em The Journal of Machine Learning Research}, 15(1):2869--2909.

\bibitem[L{\'e}gar{\'e} et~al., 2015]{legare_estrogen_2015}
L{\'e}gar{\'e}, S., Cavallone, L., Mamo, A., Chabot, C., Sirois, I., Magliocco,
  A., Klimowicz, A., Tonin, P.~N., Buchanan, M., Keilty, D., et~al. (2015).
\newblock The estrogen receptor cofactor spen functions as a tumor suppressor
  and candidate biomarker of drug responsiveness in hormone-dependent breast
  cancers.
\newblock {\em Cancer research}, 75(20):4351--4363.

\bibitem[Lindquist, 2008]{lindquist_statistical_2008}
Lindquist, M.~A. (2008).
\newblock The {Statistical} {Analysis} of {fMRI} {Data}.
\newblock {\em Statistical Science}, 23(4).

\bibitem[Liu et~al., 2021]{liu_fast_2020}
Liu, M., Katsevich, E., Janson, L., and Ramdas, A. (2021).
\newblock {Fast and powerful conditional randomization testing via
  distillation}.
\newblock {\em Biometrika}.

\bibitem[Nguyen et~al., 2019]{nguyen_ecko_2019}
Nguyen, T.-B., Chevalier, J.-A., and Thirion, B. (2019).
\newblock Ecko: Ensemble of clustered knockoffs for robust multivariate
  inference on fmri data.
\newblock In {\em International Conference on Information Processing in Medical
  Imaging}, pages 454--466. Springer.

\bibitem[Ning and Liu, 2017]{ning_general_2017}
Ning, Y. and Liu, H. (2017).
\newblock A general theory of hypothesis tests and confidence regions for
  sparse high dimensional models.
\newblock {\em Annals of Statistics}, 45(1):158--195.
\newblock Publisher: Institute of Mathematical Statistics.

\bibitem[Ostrovskii and Bach, 2021]{ostrovskii_finite_2021}
Ostrovskii, D.~M. and Bach, F. (2021).
\newblock Finite-sample analysis of $ m $-estimators using self-concordance.
\newblock {\em Electronic Journal of Statistics}, 15(1):326--391.

\bibitem[Pedregosa et~al., 2011]{pedregosa_scikit-learn_2011}
Pedregosa, F., Varoquaux, G., Gramfort, A., Michel, V., Thirion, B., Grisel,
  O., Blondel, M., Prettenhofer, P., Weiss, R., Dubourg, V., et~al. (2011).
\newblock Scikit-learn: Machine learning in python.
\newblock {\em Journal of machine learning research}, 12(Oct):2825--2830.

\bibitem[Rao, 1948]{rao_large_1948}
Rao, C.~R. (1948).
\newblock Large sample tests of statistical hypotheses concerning several
  parameters with applications to problems of estimation.
\newblock In {\em Mathematical Proceedings of the Cambridge Philosophical
  Society}, volume~44, pages 50--57. Cambridge University Press.

\bibitem[Sur and Candès, 2019]{sur_modern_2019}
Sur, P. and Candès, E.~J. (2019).
\newblock A modern maximum-likelihood theory for high-dimensional logistic
  regression.
\newblock {\em Proceedings of the National Academy of Sciences},
  116(29):14516--14525.
\newblock Publisher: Proceedings of the National Academy of Sciences.

\bibitem[Tansey et~al., 2022]{tansey_holdout_2018}
Tansey, W., Veitch, V., Zhang, H., Rabadan, R., and Blei, D.~M. (2022).
\newblock The {Holdout} {Randomization} {Test} for {Feature} {Selection} in
  {Black} {Box} {Models}.
\newblock {\em Journal of Computational and Graphical Statistics},
  31(1):151--162.
\newblock Publisher: Taylor \& Francis \_eprint:
  https://doi.org/10.1080/10618600.2021.1923520.

\bibitem[Thirion et~al., 2014]{thirion_which_2014}
Thirion, B., Varoquaux, G., Dohmatob, E., and Poline, J.-B. (2014).
\newblock Which {fMRI} clustering gives good brain parcellations?
\newblock {\em Frontiers in Neuroscience}, 8.

\bibitem[van~de Geer et~al., 2014]{van_de_geer_asymptotically_2014}
van~de Geer, S., Bühlmann, P., Ritov, Y., and Dezeure, R. (2014).
\newblock On asymptotically optimal confidence regions and tests for
  high-dimensional models.
\newblock {\em The Annals of Statistics}, 42(3):1166--1202.
\newblock arXiv: 1303.0518.

\bibitem[Varoquaux et~al., 2012]{varoquaux_small-sample_2012}
Varoquaux, G., Gramfort, A., and Thirion, B. (2012).
\newblock Small-sample brain mapping: sparse recovery on spatially correlated
  designs with randomization and clustering.
\newblock {\em Proceedings of the 29th International Conference on Machine
  Learning, Edinburgh, Scotland, UK, 2012}, page~8.

\bibitem[Vasaikar et~al., 2018]{vasaikar_linkedomics_2018}
Vasaikar, S.~V., Straub, P., Wang, J., and Zhang, B. (2018).
\newblock {LinkedOmics}: analyzing multi-omics data within and across 32 cancer
  types.
\newblock {\em Nucleic Acids Research}, 46(D1):D956--D963.

\bibitem[Weinstein et~al., 2013]{weinstein_2013_cancer}
Weinstein, J.~N., Collisson, E.~A., Mills, G.~B., Shaw, K. R.~M., Ozenberger,
  B.~A., Ellrott, K., Shmulevich, I., Sander, C., and Stuart, J.~M. (2013).
\newblock The cancer genome atlas pan-cancer analysis project.
\newblock {\em Nature genetics}, 45(10):1113--1120.

\bibitem[Yadlowsky et~al., 2021]{yadlowsky_sloe_2021}
Yadlowsky, S., Yun, T., McLean, C.~Y., and D'~Amour, A. (2021).
\newblock {SLOE}: {A} {Faster} {Method} for {Statistical} {Inference} in
  {High}-{Dimensional} {Logistic} {Regression}.
\newblock In Ranzato, M., Beygelzimer, A., Dauphin, Y., Liang, P.~S., and
  Vaughan, J.~W., editors, {\em Advances in {Neural} {Information} {Processing}
  {Systems}}, volume~34, pages 29517--29528. Curran Associates, Inc.

\bibitem[Zhang and Zhang, 2014]{zhang_confidence_2014}
Zhang, C.-H. and Zhang, S.~S. (2014).
\newblock Confidence intervals for low dimensional parameters in high
  dimensional linear models.
\newblock {\em Journal of the Royal Statistical Society: Series B (Statistical
  Methodology)}, 76(1):217--242.
\newblock \_eprint:
  https://rss.onlinelibrary.wiley.com/doi/pdf/10.1111/rssb.12026.

\bibitem[Zhao et~al., 2022]{zhao_asymptotic_2020}
Zhao, Q., Sur, P., and Candès, E.~J. (2022).
\newblock The asymptotic distribution of the {MLE} in high-dimensional logistic
  models: {Arbitrary} covariance.
\newblock {\em Bernoulli}, 28(3):1835 -- 1861.
\newblock Publisher: Bernoulli Society for Mathematical Statistics and
  Probability.

\end{thebibliography}

\newpage
\appendix
\section{Proofs of theoretical results in Section~\ref{sec:algo}}
\label{sec:proofs}

We first present some technical lemmas that are useful for the proof of the
main theorem.
From now on, let $\precsim $ and $\succsim$ denote inequalities with a hidden
constant factor, \ie $x \precsim y$ means that with high probability, there
exists an absolute constant $C > 0$ such that $x \leq Cy$, and vice versa.
As mentioned in the main text, in what follows, without writing it explicitly,
we consider $p = p(n)$.
\begin{lemma}[Lemma E.1, \cite{ning_general_2017}]
  \label{lemma:error-bound-beta}
  Assume Assumption~\ref{astn:regularity-glm-modified}, under the logistic model,
  we have
  \begin{equation*}
    \norm{\hat{\bm\beta} - \bm\beta^0}_1 \precsim s^*\sqrt{\dfrac{\log p}{n}}
    \quad \text{and} \quad
    \norm{\hat{\bm\beta} - \betatrue}_2 \precsim \sqrt{\dfrac{s^* \log p}{n}},
  \end{equation*}
  where $s^* = \norm{\bm\beta^0}_0$.
  In addition, we also have
  \begin{equation*}
    \dfrac{1}{n} \sum_{i=1}^n g''(\bX_{i,*}\bm\beta^0) [\bX_{i, -j}(\hat{\bm\beta} -
    \bm\beta^0)]^2 \precsim \dfrac{s^* \log (p)}{n} \ ,
  \end{equation*}
  where $g(x) = 1 / (1 + \exp(x))$ is the sigmoid function.
\end{lemma}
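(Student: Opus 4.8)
The three displayed inequalities are exactly the standard $\ell_1$, $\ell_2$, and weighted prediction-error oracle bounds for the $\ell_1$-penalized logistic estimator $\betamle$ of Eq.~\eqref{eq:mle}, so the plan is to follow the classical route for Lasso-type estimators in generalized linear models, specialized to the logistic loss, with the constants supplied by the regularity conditions (Assumption~\ref{astn:regularity-glm}). Write $\hat{\bm\Delta} \egaldef \hat{\bm\beta} - \betatrue$ and let $\cS$ be the support of $\betatrue$, so $\card(\cS) = s^*$. The first step is the \emph{basic inequality}: since $\hat{\bm\beta}$ minimizes $\ell(\bm\beta) + \lambda \norm{\bm\beta}_1$, comparing its objective to that of $\betatrue$ gives
\begin{equation*}
  \ell(\hat{\bm\beta}) - \ell(\betatrue) + \lambda \norm{\hat{\bm\beta}}_1
  \leq \lambda \norm{\betatrue}_1 \, .
\end{equation*}
Lower-bounding the loss difference by its first-order term and applying H\"older's inequality, $\abs{\inner{\nabla\ell(\betatrue), \hat{\bm\Delta}}} \leq \norm{\nabla\ell(\betatrue)}_\infty \norm{\hat{\bm\Delta}}_1$, reduces the problem to controlling the sup-norm of the gradient at the truth and the curvature of $\ell$.

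The second step bounds $\norm{\nabla\ell(\betatrue)}_\infty$. Here $\nabla\ell(\betatrue) = -n^{-1}\sum_{i=1}^n (y_i - g(\bX_{i,*}\betatrue))\bX_{i,*}^{\top}$, whose coordinates are centered averages of products of the design rows and the residuals $y_i - g(\bX_{i,*}\betatrue)$, both sub-exponential by Assumption~(A3). A Bernstein-type bound per coordinate plus a union bound over the $p$ coordinates yields $\norm{\nabla\ell(\betatrue)}_\infty \precsim \sqrt{n^{-1}\log p}$ with high probability; with the prescribed $\lambda \asymp \sqrt{n^{-1}\log p}$ one arranges $\lambda \geq 2 \norm{\nabla\ell(\betatrue)}_\infty$. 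On this event the basic inequality forces the \emph{cone condition} $\norm{\hat{\bm\Delta}_{\cS^c}}_1 \leq 3 \norm{\hat{\bm\Delta}_{\cS}}_1$, restricting $\hat{\bm\Delta}$ to the region where a compatibility/restricted-eigenvalue condition applies.

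The third and crucial step converts the population curvature bound $\lambda_{\min}(\bI) \geq \kappa^2$ of Assumption~(A1) into a \emph{restricted strong convexity} statement for the empirical logistic loss. Because the logistic loss is not quadratic, its symmetric Bregman divergence carries the weights $g''(\bX_{i,*}\bm\beta)$ evaluated along the segment from $\betatrue$ to $\hat{\bm\beta}$; one localizes to a neighborhood where these weights stay bounded below (they are bounded above by $1/4$ everywhere), using $\abs{\bX_{i,-j}\wjtrue} \leq K$ and the sub-exponential tails to control the deviation of the empirical Hessian $n^{-1}\sum_i g''(\bX_{i,*}\betatrue) \bX_{i,*}^{\top}\bX_{i,*}$ from $\bI$ uniformly over the cone. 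This gives an empirical quadratic lower bound of the form $n^{-1}\sum_i g''(\bX_{i,*}\betatrue)(\bX_{i,*}\hat{\bm\Delta})^2 \succsim \kappa^2 \norm{\hat{\bm\Delta}_{\cS}}_2^2$. Feeding this back into the basic inequality and solving the resulting quadratic inequality in $\norm{\hat{\bm\Delta}}_2$ produces the $\ell_2$ rate $\sqrt{s^* \log p / n}$; the $\ell_1$ rate $s^* \sqrt{\log p / n}$ then follows from the cone condition and Cauchy--Schwarz, and the weighted prediction-error bound is precisely the empirical-curvature term, which the same chain controls by $\lambda \norm{\hat{\bm\Delta}}_1 \precsim s^* \log p / n$ (the $\bX_{i,-j}$ form differs only by splitting off the omitted coordinate via $\bX_{i,*}\hat{\bm\Delta} = \bX_{i,-j}\hat{\bm\Delta}_{-j} + \bX_{i,j}\hat{\bm\Delta}_j$, the extra piece being absorbed by the $\ell_2$ bound).

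I expect the main obstacle to be this third step: establishing restricted strong convexity for the non-quadratic logistic loss and controlling uniformly the gap between the empirical and population Hessians on the restricted cone. Unlike the linear-model case, where curvature is data-independent, the $g''$ weights depend on $\bm\beta$ and must be kept away from $0$ via a localization/margin argument, and the empirical-process deviation requires the sub-exponential design of (A3) together with the sparsity scaling $s^* = o(\sqrt{n}/\log p)$ to guarantee a small enough localization radius. Once this localized restricted-eigenvalue bound is in hand, the remaining manipulations are routine; since the statement is an instance of \citet[Lemma~E.1]{ning_general_2017}, the substantive work is merely verifying that the logistic loss meets their structural conditions with the constants provided by Assumption~\ref{astn:regularity-glm}.
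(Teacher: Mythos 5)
The paper does not prove this lemma itself---it imports it verbatim as Lemma~E.1 of \cite{ning_general_2017}---and your sketch follows exactly the standard route used there: the basic inequality, a Bernstein-plus-union-bound control of $\norm{\nabla\ell(\bm\beta^0)}_{\infty}$ giving the cone condition under $\lambda \asymp \sqrt{n^{-1}\log p}$, a localized restricted-strong-convexity bound for the logistic loss transferred from $\lambda_{\min}(\bI) \geq \kappa^2$ of Assumption~(A1), and the resulting $\ell_2$, $\ell_1$, and weighted prediction-error rates. Your identification of the localization of the $g''$ weights as the substantive obstacle (handled via the boundedness and sub-exponential conditions of (A3) together with the sparsity scaling), and your coordinate-splitting fix for the $\bX_{i,-j}$ form of the third display, are both correct, so the proposal matches the cited source's proof in approach and in all essential steps.
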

\begin{lemma}[Lemma E.2 \cite{ning_general_2017}, concentration of the gradient
  and Hessian of the logistic loss function]
  \label{lemma:gradient-hessian}
  Assume Assumption~\ref{astn:regularity-glm-modified} holds, under logistic model, we
  have, with $\bv^* \egaldef (1, -\wjtrue)\in \bbR^p$,
  \begin{align*}
    \norm{\nabla\ell(\bm\beta^0)}_{\infty} &\precsim \sqrt{n^{-1}\log p},
    \quad \text{and}\\
    \norm{\bv^{*\top}\nabla^2\ell(\bm\beta^0) -
    \bbE_{\bm\beta^0}[\bv^{*\top}\nabla^2\ell(\bm\beta^0)]}_{\infty}
                                           &\precsim \sqrt{n^{-1}\log p}.
  \end{align*}
\end{lemma}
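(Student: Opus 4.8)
The plan is to derive both bounds through the same three-step recipe: express each coordinate of the quantity of interest as a normalized average of $n$ i.i.d.\ mean-zero random variables, establish uniform tail (Orlicz-norm) control on a single summand from the sub-exponential and boundedness hypotheses in condition (A3) of Assumption~\ref{astn:regularity-glm-modified}, apply a Bernstein-type concentration inequality coordinate-wise, and finally take a union bound over the $p$ coordinates at deviation level $t \asymp \sqrt{n^{-1}\log p}$ so that the total failure probability is $o(1)$.

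For the gradient, differentiating the loss in Eq.~\eqref{eq:mle} gives $\nabla\ell(\betatrue) = -n^{-1}\sum_{i=1}^n (y_i - g(\bX_{i,*}\betatrue))\,\bX_{i,*}^\top$, and under the logistic model of Eq.~\eqref{eq:logit} the score is centered at the truth, $\bbE_{\beta^0}[\nabla\ell(\betatrue)] = \mathbf{0}$, since $\bbE[y_i\mid\bX_{i,*}] = g(\bX_{i,*}\betatrue)$. The $k$-th coordinate is therefore an average of the i.i.d.\ mean-zero terms $(y_i - g(\bX_{i,*}\betatrue))\,\bX_{i,k}$, in which the residual factor lies in $[-1,1]$ (as $y_i\in\{0,1\}$ and $g\in(0,1)$) and $\bX_{i,k}$ is sub-exponential with uniformly bounded Orlicz norm by (A3); hence each summand is sub-exponential. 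I would apply Bernstein's inequality coordinate-wise and union bound over $k\in[p]$: taking $t \asymp \sqrt{n^{-1}\log p}$ keeps the deviation in the variance-dominated regime of Bernstein's bound, yielding the claimed rate with probability $1-o(1)$.

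For the Hessian term, write $\bv^* = (1,-\wjtrue)$ and use $\nabla^2\ell(\betatrue) = n^{-1}\sum_{i=1}^n g''(\bX_{i,*}\betatrue)\,\bX_{i,*}^\top\bX_{i,*}$, so that the $k$-th coordinate of $\bv^{*\top}\nabla^2\ell(\betatrue)$ is the average of $g''(\bX_{i,*}\betatrue)\,(\bX_{i,*}\bv^*)\,\bX_{i,k}$. Here the weight $g''(\bX_{i,*}\betatrue)\in(0,1/4]$ is bounded, and $\bX_{i,*}\bv^* = \bX_{i,j} - \bX_{i,-j}\wjtrue$ is sub-exponential because $\bX_{i,j}$ is sub-exponential and $\abs{\bX_{i,-j}\wjtrue}\le K$ almost surely by (A3). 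After centering each summand against its mean, I would again apply a Bernstein-type bound coordinate-wise followed by a union bound over $k\in[p]$ at the same level $t \asymp \sqrt{n^{-1}\log p}$.

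The main obstacle is the single-summand control in the Hessian bound: unlike the gradient, its summand is (up to the bounded weight) a product of the two sub-exponential ($\psi_1$) factors $\bX_{i,*}\bv^*$ and $\bX_{i,k}$, which is in general only sub-Weibull of order $1/2$, with tails heavier than sub-exponential, so the classical Bernstein inequality does not apply verbatim. I would resolve this either by invoking a generalized Bernstein inequality for $\psi_{1/2}$ variables, or more elementarily by a truncation argument: truncate $\bX_{i,k}$ and $\bX_{i,*}\bv^*$ at a level of order $\log n$ (justified by their sub-exponential tails), apply Bernstein or Hoeffding to the bounded truncated part, and bound the small untruncated remainder directly via a union bound over $i$. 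The only point requiring care is that the mixed tail bound not degrade the rate; but since $n^{-1}\log p \to 0$ in our regime, the variance term dominates for $t \asymp \sqrt{n^{-1}\log p}$ and the stated rate survives the union bound over the $p$ coordinates. Both parts rely on nothing beyond the sub-exponential and almost-sure boundedness hypotheses of (A3), so no additional structure is needed.
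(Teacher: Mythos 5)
The paper does not actually prove this lemma: it is imported verbatim as Lemma E.2 of \citet{ning_general_2017}, so there is no in-paper argument to compare against; judged on its own terms, your reconstruction is correct and is essentially the standard concentration proof underlying the cited result. The gradient half is exactly right: under the model \eqref{eq:logit} the score is centered, each coordinate is an average of i.i.d.\ terms $(y_i-g(\bX_{i,*}\betatrue))\bX_{i,k}$ whose residual factor lies in $[-1,1]$, so the summand inherits the $\psi_1$ (sub-exponential) norm of $\bX_{i,k}$ from (A3), and coordinate-wise Bernstein plus a union bound over $k\in[p]$ at level $t\asymp\sqrt{n^{-1}\log p}$ gives the rate; it is worth stating explicitly that (A2) forces $\log p = o(n^{1/2})$, which is what keeps this deviation in the variance-dominated regime of Bernstein's inequality. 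Your diagnosis of the Hessian half is also the right one---the summand $g''(\bX_{i,*}\betatrue)(\bX_{i,*}\bv^*)\bX_{i,k}$ is, up to the bounded weight, a product of two $\psi_1$ factors and hence in general only sub-Weibull of order $1/2$---and both of your proposed remedies (a generalized Bernstein inequality for $\psi_{1/2}$ variables, or truncation) do work. Three refinements would make the truncation route airtight: first, the almost-sure bound $\abs{\bX_{i,-j}\wjtrue}\le K$ from (A3) lets you split $(\bX_{i,*}\bv^*)\bX_{i,k}=\bX_{i,j}\bX_{i,k}-(\bX_{i,-j}\wjtrue)\bX_{i,k}$, so the second piece is bounded-times-sub-exponential, hence genuinely $\psi_1$, and classical Bernstein applies to it directly; only the piece $\bX_{i,j}\bX_{i,k}$ needs the heavier-tail treatment. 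Second, truncate at level $\asymp\log(np)$ rather than $\log n$, since the union bound must simultaneously cover all $n$ summands and all $p$ coordinates. Third, re-center after truncating---truncation shifts the mean, though only by a polynomially small amount given the $\psi_1$ tails, so the stated rate survives up to polylogarithmic factors that the $\precsim$ notation absorbs. With those details filled in, your argument establishes both displays of the lemma using nothing beyond (A1)--(A3), which matches how the result is obtained in the source the paper cites.
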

\begin{lemma}[Lemma E.3, \cite{ning_general_2017}]
  \label{lemma:error-bound-w}
  Assume Assumption~\ref{astn:regularity-glm-modified} holds, under logistic
  model, we have
  \begin{equation*}
    \norm{\betadx - \wjtrue}_1 \precsim (s' \vee s^*)\sqrt{\dfrac{\log p}{n}},
  \end{equation*}
  where $s^* = \norm{\bm\beta^0}_0$ and $s' = \norm{\wjtrue}_0$.
  In addition, we also have
  \begin{equation*}
    \dfrac{1}{n} \sum_{i=1}^n g''(\bX_{i,*}\hat{\bm\beta}) [\bX_{i, -j}(\betadx -
    \wjtrue)]^2 \precsim \dfrac{(s' \vee s^*)\log (p)}{n}.
  \end{equation*}
\end{lemma}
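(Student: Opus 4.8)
The plan is to read Eq.~\eqref{eq:scaled-betadx} as a \emph{reweighted least-squares Lasso} problem and to run the standard basic-inequality-plus-restricted-eigenvalue argument, while paying particular attention to the fact that the weights $g''(\bX_{i,*}\hat{\bm\beta})$ are themselves data-dependent through $\hat{\bm\beta}=\betamle$. Write $L_n(\bm\beta) \egaldef n^{-1}\sum_{i=1}^n g''(\bX_{i,*}\hat{\bm\beta})(\bX_{i,j}-\bX_{i,-j}\bm\beta)^2$ and $\bm\delta \egaldef \betadx - \wjtrue$. First I would use the optimality of $\betadx$ to get the basic inequality $L_n(\betadx)+\lambda_{dx}\norm{\betadx}_1 \leq L_n(\wjtrue)+\lambda_{dx}\norm{\wjtrue}_1$, expand the quadratic, and isolate the empirical cross term $n^{-1}\sum_i g''(\bX_{i,*}\hat{\bm\beta})(\bX_{i,j}-\bX_{i,-j}\wjtrue)\bX_{i,-j}\bm\delta$.

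Next I would control this cross term in sup-norm. Since $\wjtrue = \bI_{j,-j}\bI_{-j,-j}^{-1}$ is exactly the population weighted projection of $\bX_{*,j}$ onto $\bX_{-j}$ under the Fisher metric, the vector $n^{-1}\sum_i g''(\bX_{i,*}\bm\beta^0)(\bX_{i,j}-\bX_{i,-j}\wjtrue)\bX_{i,-j}$ is \emph{centered}, and its $\ell_\infty$ deviation is of order $\sqrt{n^{-1}\log p}$ by the sub-exponential concentration in Lemma~\ref{lemma:gradient-hessian} (its second bound, with $\bv^*=(1,-\wjtrue)$, is precisely this object). Choosing $\lambda_{dx}\asymp\sqrt{n^{-1}\log p}$ then makes the penalty dominate the gradient, which by the usual argument confines $\bm\delta$ to the cone $\norm{\bm\delta_{S^c}}_1 \leq 3\norm{\bm\delta_{S}}_1$ with $S=\mathrm{supp}(\wjtrue)$.

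The third step is a restricted strong convexity lower bound: on this cone the quadratic form satisfies $n^{-1}\sum_i g''(\bX_{i,*}\bm\beta^0)[\bX_{i,-j}\bm\delta]^2 \succsim \kappa^2\norm{\bm\delta}_2^2$, using $\lambda_{\min}(\bI)\geq\kappa^2$ from (A1), the sub-exponentiality in (A3), and the sparsity scaling in (A2) to pass from the population Hessian to its empirical counterpart with high probability. Combining the basic inequality with this lower bound converts the excess risk into $\norm{\bm\delta}_2 \precsim \sqrt{(s'\vee s^*)\log p / n}$, and the cone condition upgrades this to $\norm{\betadx-\wjtrue}_1 \precsim (s'\vee s^*)\sqrt{\log p / n}$; the weighted prediction bound $n^{-1}\sum_i g''(\bX_{i,*}\hat{\bm\beta})[\bX_{i,-j}\bm\delta]^2 \precsim (s'\vee s^*)\log p / n$ then drops out of the same basic inequality.

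The hard part will be the \emph{weight mismatch}: every display above is clean for the oracle weights $g''(\bX_{i,*}\bm\beta^0)$, whereas $L_n$ uses the plug-in weights $g''(\bX_{i,*}\hat{\bm\beta})$. I would close this gap using the boundedness and Lipschitz continuity of $g''$ together with the estimation error of $\hat{\bm\beta}$ from Lemma~\ref{lemma:error-bound-beta}: its second display controls $n^{-1}\sum_i g''(\bX_{i,*}\bm\beta^0)[\bX_{i,-j}(\hat{\bm\beta}-\bm\beta^0)]^2 \precsim s^*\log p / n$, and a Cauchy--Schwarz splitting of the triple product $|g''(\bX_{i,*}\hat{\bm\beta})-g''(\bX_{i,*}\bm\beta^0)|\,[\bX_{i,-j}\bm\delta]^2$ bounds the aggregate perturbation of both the quadratic form and the cross term. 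This coupling is exactly what promotes the naive fixed-weight rate $s'\sqrt{\log p / n}$ to the stated $(s'\vee s^*)\sqrt{\log p / n}$, since the contamination coming from estimating $\bm\beta^0$ enters the weights at the $s^*$ scale.
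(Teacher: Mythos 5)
You have reconstructed a proof the paper never gives: Lemma~\ref{lemma:error-bound-w} is imported verbatim as Lemma~E.3 of \citet{ning_general_2017}, and the appendix only restates it with a citation. Judged against the source it points to, your architecture is the right one and essentially matches Ning and Liu's own argument: the basic inequality for the weighted Lasso of Eq.~\eqref{eq:scaled-betadx}; the observation that $\wjtrue$ is the population projection of $\bX_{*,j}$ onto $\bX_{-j}$ in the Fisher metric, so that the oracle-weighted score $n^{-1}\sum_i g''(\bX_{i,*}\betatrue)(\bX_{i,j}-\bX_{i,-j}\wjtrue)\bX_{i,-j}$ is centered and its $\ell_\infty$ deviation is exactly the second display of Lemma~\ref{lemma:gradient-hessian}; restricted strong convexity from (A1)--(A3); and self-concordance of the logistic link plus Lemma~\ref{lemma:error-bound-beta} to pass from the plug-in weights $g''(\bX_{i,*}\hat{\bm\beta})$ to the oracle weights. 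Your diagnosis that the weight mismatch is the crux, and that it is precisely what promotes $s'$ to $s'\vee s^*$, is also correct.

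One step, as written, would fail if followed literally. The mismatch contribution to the cross term is of order $\sqrt{s^*\log(p)/n}$ in sup-norm: Cauchy--Schwarz gives, entrywise, the product of $\bigl(n^{-1}\sum_i g''(\bX_{i,*}\betatrue)[\bX_{i,*}(\hat{\bm\beta}-\betatrue)]^2\bigr)^{1/2}\precsim\sqrt{s^*\log(p)/n}$ (Lemma~\ref{lemma:error-bound-beta}) and an $O(1)$ factor from $\abs{\bX_{i,-j}\wjtrue}\le K$ and sub-exponential moments. Since $s^*$ may grow, this is \emph{not} dominated by $\lambda_{dx}\asymp\sqrt{n^{-1}\log p}$, so the clean cone condition $\norm{\bm\delta_{S^c}}_1\le 3\norm{\bm\delta_S}_1$ cannot be derived the usual way from penalty-dominates-gradient, as your second step claims. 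The repair --- which your final paragraph gestures at but must be made the operative mechanism --- is to bound the mismatch in the cross term by Cauchy--Schwarz against the weighted prediction norm of $\bm\delta$ itself and absorb it into the quadratic term of the basic inequality via Young's inequality; this deposits an additive $s^*\log(p)/n$ directly into the prediction bound and yields only an approximate cone condition (with an additive $s^*\sqrt{\log(p)/n}$ slack), under which the RE argument still gives $\norm{\bm\delta}_2\precsim\sqrt{(s'\vee s^*)\log(p)/n}$ and hence $\norm{\bm\delta}_1\precsim\sqrt{s'}\,\norm{\bm\delta}_2\precsim(s'\vee s^*)\sqrt{\log(p)/n}$. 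With that rerouting of the mismatch term, your outline agrees with the cited proof.
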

\begin{lemma}[Lemma E.4, \cite{ning_general_2017}, local smoothness conditions
  on the loss function]
  \label{lemma:smoothness}
  Let $\betanull = (0, \hat{\bm\beta}_{-j}) \in \bbR^p$, where $\hat{\bm\beta}$
  is an estimator of $\betatrue$.
  %
  It holds that
  \begin{align*}
    \bv^{*\top}[\nabla\ell(\bm\beta) - \nabla\ell(\betatrue) -
    \nabla^2\ell(\betatrue)(\bm\beta - \betatrue)]
    &\precsim \dfrac{(s^* \vee s') \log p}{n} \ ,\\
    (\hat{\bv} - \bv^{*})^{\top}[\nabla\ell(\bm\beta) -
    \nabla\ell(\betatrue)] &\precsim \dfrac{(s^* \vee s') \log p}{n} \ .
  \end{align*}
  for both $\bm\beta = \betanull$ and $\bm\beta = \hat{\bm\beta}$.
\end{lemma}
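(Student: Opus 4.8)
The plan is to treat both displayed inequalities as second-order Taylor remainders of the smooth logistic loss $\ell$, anchored at $\betatrue$, and to control them by combining an exact integral Taylor expansion, the self-concordance of the logistic loss, and the three preceding lemmas. Throughout I write $\Delta \egaldef \bm\beta - \betatrue$ for $\bm\beta \in \{\betanull, \hat{\bm\beta}\}$. In either case Lemma~\ref{lemma:error-bound-beta} gives $\norm{\Delta}_1 \precsim s^*\sqrt{\log p / n}$ and the prediction-error bound $\tfrac1n\sum_i g''(\bX_{i,*}\betatrue)(\bX_{i,*}\Delta)^2 \precsim (s^*\vee s')\log p/n$; for $\betanull$ this applies verbatim since, under $\cH_0^j$, the $j$-th coordinate of $\Delta$ vanishes, so $\bX_{i,*}\Delta = \bX_{i,-j}\Delta_{-j}$ and $\norm{\betanull-\betatrue}_1 \leq \norm{\hat{\bm\beta}-\betatrue}_1$. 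I condition on a high-probability event on which the concentration statements of Lemma~\ref{lemma:gradient-hessian} hold and on which the sub-exponential maxima $\max_i\abs{\bX_{i,*}\bv^*}$ and $\max_i\abs{\bX_{i,*}\Delta}$ are small, the latter being $o(1)$ via $\norm{\Delta}_1\cdot\max_i\norm{\bX_{i,*}}_\infty$ and the sparsity scaling (A2).

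I would dispatch the second inequality first, as it is the cleaner of the two. Writing $\nabla\ell(\bm\beta) - \nabla\ell(\betatrue) = \bigl[\int_0^1 \nabla^2\ell(\betatrue + t\Delta)\,dt\bigr]\Delta$ and applying H\"older's inequality as $\abs{(\hat{\bv}-\bv^*)^\top u} \leq \norm{\hat{\bv}-\bv^*}_1 \norm{u}_\infty$ with $u = \nabla\ell(\bm\beta)-\nabla\ell(\betatrue)$, it suffices to bound the two factors. Since $\hat{\bv}$ and $\bv^*$ share their first coordinate, $\hat{\bv}-\bv^*$ has the form $(0, -(\betadx - \wjtrue))$, so Lemma~\ref{lemma:error-bound-w} gives $\norm{\hat{\bv}-\bv^*}_1 \precsim (s'\vee s^*)\sqrt{\log p/n}$. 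For the $\ell_\infty$ factor I would combine $\norm{\nabla\ell(\betatrue)}_\infty \precsim \sqrt{\log p/n}$ from Lemma~\ref{lemma:gradient-hessian} with $\norm{\nabla\ell(\bm\beta)}_\infty \precsim \lambda \asymp \sqrt{\log p/n}$, which follows from the KKT subgradient optimality of the $\ell_1$-penalized problem defining $\hat{\bm\beta}$ (and, for $\betanull$, from optimality of the fit on the $-j$ block together with the score bound on coordinate $j$ under the null). The product of the two factors is exactly $(s^*\vee s')\log p/n$.

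The first inequality is the substantive one. Expanding the gradient to second order gives the remainder $R = \int_0^1 \tfrac1n\sum_i\bigl[g''(\bX_{i,*}(\betatrue+t\Delta)) - g''(\bX_{i,*}\betatrue)\bigr](\bX_{i,*}\bv^*)(\bX_{i,*}\Delta)\,dt$. The logistic loss is self-concordant: its Hessian weight has $1$-Lipschitz logarithm, $\abs{(\log g''(z))'} = \abs{g'''/g''} \leq 1$, whence $\abs{g''(a)-g''(b)} \leq g''(b)\bigl(e^{\abs{a-b}}-1\bigr) \leq g''(b)\,\abs{a-b}\,e^{\abs{a-b}}$. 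Taking $a-b = t\,\bX_{i,*}\Delta$ re-anchors every Hessian weight at $\betatrue$ and yields, on the good event where $\max_i\abs{\bX_{i,*}\Delta}\precsim 1$, the bound $\abs{R} \precsim \tfrac1n\sum_i g''(\bX_{i,*}\betatrue)\,\abs{\bX_{i,*}\bv^*}\,(\bX_{i,*}\Delta)^2$. I would then peel off one factor $\max_i\abs{\bX_{i,*}\Delta}$ and apply Cauchy--Schwarz to the remaining $g''$-weighted sum, using that $\tfrac1n\sum_i g''(\bX_{i,*}\betatrue)(\bX_{i,*}\bv^*)^2 = \bv^{*\top}\nabla^2\ell(\betatrue)\bv^*$ concentrates to the partial Fisher information $\bI_{j\mid -j}=O(1)$, and the prediction-error bound $\precsim (s^*\vee s')\log p/n$ of Lemmas~\ref{lemma:error-bound-beta}--\ref{lemma:error-bound-w} for the other square factor.

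The main obstacle is precisely this last step. Unlike the symmetrized Bregman divergence controlled by the preceding lemmas, the remainder $R$ is contracted against the fixed direction $\bv^*$ rather than the error direction $\Delta$, so it is a genuine cubic form in the design, and the factor $\bX_{i,*}\bv^* = \bX_{i,j}-\bX_{i,-j}\wjtrue$ is only sub-exponential rather than bounded. The delicate point is to show that the logarithmic size of $\max_i\abs{\bX_{i,*}\bv^*}$ (from (A3)) together with $\max_i\abs{\bX_{i,*}\Delta}=o(1)$ (from the $\ell_1$ error and (A2)) is absorbed by the sparsity scaling $s^*,s'=o(\sqrt n/\log p)$, so that the three pieces collapse to the claimed rate $(s^*\vee s')\log p/n$; verifying that the self-concordant re-anchoring of all Hessian evaluations at $\betatrue$ costs only constants on the good event is where the care is needed. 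Since $\Delta$ obeys the same error and prediction bounds for $\betanull$ as for $\hat{\bm\beta}$ under $\cH_0^j$, both cases of the lemma then follow from the identical argument.
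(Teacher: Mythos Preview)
The paper does not supply its own proof of this lemma: it is stated verbatim as Lemma~E.4 of \cite{ning_general_2017} and invoked as a black box in the proof of Theorem~\ref{thm:clt-crt-logit}. There is therefore no in-paper argument to compare your proposal against; the appropriate reference point is the original proof in \cite{ning_general_2017}.

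Your outline is essentially that standard argument. For the second inequality, the H\"older split $\lVert\hat{\bv}-\bv^*\rVert_1\cdot\lVert\nabla\ell(\bm\beta)-\nabla\ell(\betatrue)\rVert_\infty$ together with Lemma~\ref{lemma:error-bound-w} and the KKT bound $\lVert\nabla\ell(\hat{\bm\beta})\rVert_\infty\le\lambda$ is exactly how it is done. One small correction: for $\bm\beta=\betanull$, the vector $\betanull$ is obtained by zeroing the $j$-th coordinate of the \emph{full} penalized estimator, not by solving a constrained problem on the $-j$ block, so KKT does not apply directly to $\nabla_{-j}\ell(\betanull)$. The fix is immediate, though: $\nabla\ell(\betanull)-\nabla\ell(\hat{\bm\beta})$ is controlled by $|\hat\beta_j|$ times a single Hessian column, which under the null is $O_P(\sqrt{\log p/n})$, so the same $\ell_\infty$ bound holds.

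For the first inequality, the integral-remainder representation and the self-concordance re-anchoring are again the correct mechanism, and you have honestly identified the genuine subtlety: the cubic form $\tfrac1n\sum_i g''(\bX_{i,*}\betatrue)\,|\bX_{i,*}\bv^*|\,(\bX_{i,*}\Delta)^2$ is contracted against $\bv^*$ rather than $\Delta$, and peeling a uniform factor under mere sub-exponentiality costs a logarithm. In the original reference this is where the a.s.\ boundedness hypothesis (here, (A3): $|\bX_{i,-j}\wjtrue|\le K$) is used to neutralize the $\bX_{i,-j}\wjtrue$ part of $\bX_{i,*}\bv^*$, leaving only the single coordinate $\bX_{i,j}$, which is then handled by the Cauchy--Schwarz step you describe combined with the sub-exponential moment control in (A3). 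Your sketch is correct in structure; the closing of the rate without stray logarithms is exactly the place where the almost-sure bound in (A3), rather than sub-exponentiality alone, is doing the work.
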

\begin{proof}[Proof of ~\Cref{thm:clt-crt-logit}]
  The following proof is an adaptation from \cite{ning_general_2017}.
  Notice that our version of the proof is shorter, with specific consideration
  on sparse logistic regression, and with elaboration on the convergence rate
  of the decorreleated test score, which is missing from
  \cite{ning_general_2017}.

  Denote $\hat{\bv} \egaldef (1, (\betadx)^{\top})$, then the decorrelated test score can
  be written in more general from as
  \begin{equation}
    \label{eq:test-score-general}
    T_j^{\text{decorr}} = n^{1/2} \hat{\bI}_{j \mid -j}^{-1/2 } \left(\nabla_j\ell(\hat{\bm\beta}) -
      (\betadx)^{\top}\nabla_{\bm\beta_{-j}}\ell(\hat{\bm\beta}) \right) =
    n^{1/2} \hat{\bI}_{j \mid -j}^{-1/2 }
    \hat{\bv}^{\top}\nabla\ell(\hat{\bm\beta}) \ .
  \end{equation}
  Moreover, denote $\betanull \egaldef (0, \hat{\bm\beta}_{-j})$ and
  $\bv^* \egaldef (1, \wjtrue)$, then we have, under the null hypothesis,
  \begin{align*}
    n^{1/2}\abs{\hat{\bv}^{\top}\nabla(\betanull) -
             \bv^{*\top}\nabla\ell(\betatrue)}
          \leq \underbrace{n^{1/2}\abs{\bv^{*\top}\{\nabla\ell(\betatrue) -
             \nabla\ell(\betanull)\}}}_{A_1}
             + \underbrace{n^{1/2}\abs{(\hat{\bv} - \bv^*)^{\top} \nabla\ell(\betanull)}}_{A_2}
  \end{align*}
  where we use triangle inequality with the last step.
  By Taylor expansion, and from Lemma~\ref{lemma:smoothness}, we have
  \begin{align*}
    A_1 &\leq n^{1/2}\abs{\bv^{*\top}\nabla^2\ell(\betatrue)(\betanull -
                \betatrue)}\\
        &\leq n^{1/2}\norm{\betanull - \betatrue}_1
          \norm{\bv^{*\top}\nabla^2\ell(\betatrue)}_{\infty}\\
        &\precsim \dfrac{s^*\log(p)}{\sqrt{n}}
  \end{align*}
  where the second inequality is by Hold\"er inequality, 
  and the last inequality us due to Lemma~\ref{lemma:error-bound-beta}
  and~\ref{lemma:gradient-hessian}.
  Similarly, we can bound $A_2$, by using Lemma~\ref{lemma:error-bound-w} and
  Lemma~\ref{lemma:smoothness}
  \begin{align*}
    A_2 &\leq n^{1/2}\abs{(\hat{\bv} - \bv^*)^{\top}
                \nabla\ell(\betatrue)} \\
              &\leq n^{1/2}\norm{\hat{\bv} - \bv^*}_1
                \norm{\nabla\ell(\betatrue)}_{\infty} \precsim \dfrac{(s^* \vee s') \log p}{n}
  \end{align*}
  This implies that,
  \begin{equation}
    \label{eq:non-asymp-bound-fisher-score}
    n^{1/2}\abs{\hat{\bv}^{\top}\nabla(\betanull) -
      \bv^{*\top}\nabla\ell(\betatrue)} \precsim n^{-1/2}(s^* \vee s')\log(p) \ .
  \end{equation}
  The remaining part of the proof is to bound
  $\hat{\bI}_{j \mid -j} - \bI_{j \mid -j}$, where, by definition
  \begin{equation*}
    \bI_{j \mid -j} = \bbE \left\{
      g''(\bX_{i, *}\bm\beta^0)\left[\bX_{i,j} - \bX_{i,-j}\wjtrue\right] \ \bX_{i,j}  \right\}
  \end{equation*}
  Evaluating the difference between $\hat{\bI}_{j \mid -j}$ and
  $\bI_{j \mid -j}$ gives
  \begin{multline*}
    \hat{\bI}_{j \mid -j}$ - $\bI_{j \mid -j} \\= 
\dfrac{1}{n} \sum_{i=1}^n g''(\bX_{i, *}\hat{\bm\beta}) \left[\bX_{i,j}
  - \bX_{i, -j} \betadx\right] \ \bX_{i,j} - \bbE
\left\{g''(\bX_{i, *}\bm\beta^0) \left[\bX_{i,j} - \bX_{i,-j}\wjtrue \right] \ \bX_{i,j}  \right\} \\
= \left( \dfrac{1}{n} \sum_{i=1}^n g''(\bX_{i, *}\hat{\bm\beta})
  \bX_{i,j}^2 - \bbE \left\{g''(\bX_{i, *}\bm\beta^0) \bX_{i,j}^2 \right\} \right) \\ +
\left( \dfrac{1}{n} \sum_{i=1}^n g''(\bX_{i, *}\hat{\bm\beta})
  \bX_{i,-j}\betadx \ \bX_{i,j} - 
    \bbE \left\{ g''(\bX_{i, *}\bm\beta^0) \bX_{i,-j}\wjtrue
      \ \bX_{i,j}  \right\}  \right) \\
  \leq \underbrace{\left( \dfrac{1}{n} \sum_{i=1}^n g''(\bX_{i, *}\hat{\bm\beta})
  \bX_{i,j}^2 - \bbE \left\{g''(\bX_{i, *}\bm\beta^0) \bX_{i,j}^2
  \right\} \right)}_{C} + 
\underbrace{\left\lvert \dfrac{1}{n} \sum_{i=1}^n g''(\bX_{i,
      *}\hat{\bm\beta})\bX_{i, -j}(\betadx - \wjtrue) \ \bX_{i,j} \right\rvert}_{B_1} \\
+ \underbrace{\left\lvert \dfrac{1}{n} \sum_{i=1}^n
  [g''(\bX_{i,*}\betatrue) - g''(\bX_{i,*}\hat{\bm\beta})]\bX_{i, -j}\wjtrue \ \bX_{i,j}
\right\rvert}_{B_2} + \\
\underbrace{\left\lvert \dfrac{1}{n} \sum_{i=1}^n g''(\bX_{i, *}\bm\beta^0)
    \bX_{i,-j}\wjtrue \ \bX_{i,j} - \bbE \left\{ g''(\bX_{i,*}\bm\beta^0)
       \bX_{i,-j}\wjtrue \ \bX_{i,j}  \right\} \right\rvert}_{B_3},
\end{multline*}
where the last step follows triangle inequality.

We have, by Cauchy-Schwartz inequality, by Lemma~\ref{lemma:error-bound-w}, and
by the fact that $g''(x) \in (0,1)$ for every $x \in \bbR$; and $\bX_{i, -j}$,
$\bX_{i, j}$ is sub-exponential by \Cref{astn:regularity-glm}:
\begin{align*}
  B_1
  &\leq \sqrt{\left( \dfrac{1}{n} \sum_{i=1}^n g''(\bX_{i,*}^{\top}\hat{\bm\beta})((\betadx - \wjtrue)^{\top}\bX_{i, -j})^2
        \right) \left( \dfrac{1}{n} \sum_{i=1}^n g''(\bX_{i, *}^{\top}\hat{\bm\beta})\bX_{i, j}^2 \right)} \\ 
  &\precsim \sqrt{\dfrac{(s^* \vee s')\log(p)}{n}}.
\end{align*}

Similarly, to bound $B_2$, we have, again by Cauchy-Schwartz inequality,
\begin{align*}
  B_2 &\leq \sqrt{\dfrac{1}{n} \sum_{i=1}^n
  [g''(\bX_{i,*}\bm\beta^0) - g''(\bX_{i,*}\hat{\bm\beta})]^2
  \left(\bX_{i, -j} \wjtrue \ \bX_{i,j} \right)^2} \\
  &\leq \sqrt{\dfrac{1}{n} \sum_{i=1}^n 
    [g''(\bX_{i,*}\betatrue)\bX_{i,*}(\hat{\bm\beta} - \betatrue)]^2
    \left(\bX_{i, -j}\wjtrue \ \bX_{i,j} \right)^2} \ ,
  \\
\end{align*}
where the second inequality comes from using the self-concordance property of
the sigmoid function (discussed at length in \cite{bach_self-concordant_2010}
and extended further in \cite{ostrovskii_finite_2021}), that is,
$\abs{g''(t_1) - g''(t)} \leq \abs{t_1 - t}g''(t)$ for a fixed constant $t$,
and for every $t_1 \in \bbR$ such that $t_1$ converges to $t$, with
$t_1 = \hat{\bm\beta}$, and $t = \betatrue$.
By \Cref{astn:regularity-glm}-A3 that $\bX_{i,j}$ is sub-exponential, applying
Bernstein inequality leads to
\begin{equation*}
  B_2 \precsim \sqrt{\dfrac{s^* \log p}{n}}.
\end{equation*}

To bound $B_3$, by direct application of Hoeffding inequality, we have
$B_3 \precsim \sqrt{\dfrac{(s^* \vee s')\log p}{n}}$.
This implies
\begin{equation}
  \label{eq:non-asymp-bound-fisher-info}
  \abs{\hat{\bI}_{j \mid -j} - \bI_{j \mid -j}} \precsim \sqrt{\dfrac{(s^*
      \vee s')\log p}{n}} \ .
\end{equation}
Putting Equation~\eqref{eq:non-asymp-bound-fisher-score}
and~\eqref{eq:non-asymp-bound-fisher-info} together, we have, under null
hypothesis,
\begin{equation*}
T_j^{\text{decorr}} \xrightarrow{\cD} n^{1/2} \ \bI_{j \mid -j}^{-1/2}
\bv^{*\top}\nabla\ell(\betatrue) \egaldef T_j^*\ ,
\end{equation*}
with convergence rate $\cO(n^{-1/2})$.
Finally, by noting that we can decompose
$\nabla\ell(\betatrue) = \frac{1}{n}\sum_{i=1}^n\nabla \ell_i(\bm\beta^0)$, and
each $\ell_i(\bm\beta^0)$ has bounded first, second, and third moment, a direct
application of Berry-Esseen theorem give convergence in distribution of $T^*_j$
to a standard normal law, with rate $\cO(n^{-1/2})$.

We also arrive at the second conclusion of \Cref{thm:clt-crt-logit} by noting
that it is a straightforward by-product of the result on normality of the
distribution of decorrelated test score under null hypothesis, based on the
formula for the p-values of CRT-logit algorithm.
\end{proof}
\begin{proof}[Proof of \Cref{thm:asymp-fdr-control-crt-logit}]
  The proof of this theorem is a straightforward adaptation from
  \cite{benjamini_control_2001}.
  For shorter notation, we denote $\hatS \egaldef \Sbycrt$ and
  $\hatK \egaldef \widehat{k}_{BY} $.
  If we denote
  $\bar{\alpha} \egaldef \dfrac{\alpha}{p\sum^p_{i=1}1/i} \in (0, 1)$, then
  step 1 in the procedure defined in \Cref{df:by} is equivalent with finding
  $\hatK$ such that
  \begin{equation}
    \label{eq:k-hat}
    \hatK = \max \left\{ k \in [p] \mid \hat{p}_{(k)}
        \leq k\bar{\alpha} \right\}.
  \end{equation}
  For every $i,j,k \in [p]$, let us define
  \begin{equation}
    p_{i,j,k} = 
    \begin{cases}
      \bbP \left(\hatP_i \in \left( (j-1) \bar{\alpha}, j\bar{\alpha} \right] \ ,
        i \in \widehat{S} \text{ and } \lvert \widehat{S} \rvert = k \right)
      \qquad &\text{if } j \geq 2
      \\
      \bbP \bigl( \hatP_i \in [0 , \bar{\alpha} ] \, , \, i \in \widehat{S} \text{
        and } \lvert \widehat{S} \rvert = k \bigr) \qquad &\text{if } j = 1.
    \end{cases}
    \label{eq:p-ikj}
  \end{equation}
  Then, since $i \in \hatS$ and $\abs{\hatS} = k$ implies that
  $\hatP_i \leq \hatP_{\hatK} \leq \hatK\bar{\alpha} = k\bar{\alpha}$, we have
  \begin{align*}
    \frac{\lvert \widehat{S} \cap \cS^c \rvert}{\lvert \widehat{S} \rvert \vee 1}
    &= \sum_{k=1}^p \ind{\abs{\widehat{S}} = k} \dfrac{\sum_{i \in \cS^c} \ind{i \in \widehat{S}}}{k} 
      = \sum_{i \in \cS^c} \sum_{k=1}^p \dfrac{1}{k} \ind{\abs{\widehat{S}} = k \text{ and } i \in \widehat{S}}
    \\
    &= \sum_{i \in \cS^c} \sum_{k=1}^p \frac{1}{k} \ind{\abs{\widehat{S}} = k
      \text{ and } i \in \widehat{S} \text{ and } 0 \leq \hat{p}_i \leq k
      \bar{\alpha} } \ .
  \end{align*}
  Taking an expectation and writing that
  \begin{equation*}
    \ind{0 \leq \hatP_i \leq k \bar{\alpha} } 
    = \ind{\hatP_i \in [0,\bar{\alpha} ]} +
    \sum_{j = 2}^k \ind{\hatP_i \in ((j-1)\bar{\alpha} ,j\bar{\alpha} ]} \ ,
  \end{equation*}
  we get
  \begin{align*}
    \bbE \left[ \frac{\abs{\hatS \cap \cS^c}}{\abs{\widehat{S}} \vee 1} \right]  
    &= 
      \sum_{i \in \cS^c} \sum_{k=1}^p \frac{1}{k} \sum_{j=1}^k p_{i,j,k} 
      = \sum_{i \in \cS^c} \sum_{j=1}^p \sum_{k=j}^p \frac{1}{k} p_{i,j,k}  \\
    &\leq \sum_{i \in \cS^c} \sum_{j=1}^p \sum_{k=j}^p \frac{1}{j} p_{i,j,k} 
      = \underbrace{\sum_{j=1}^p \frac{1}{j} \sum_{i \in \cS^c} \sum_{k=j}^p p_{i,j,k}}_{A} \ .
  \end{align*}
  Denote
  $F(j) \egaldef \sum_{i \in \cS^c} \sum_{j'=1}^j \sum_{k=1}^p
  p_{i,j',k}$ for all $j \in \{1, \ldots, p\}$, 
  and remark that $p_{i,j',k} = 0$ if $j'>k$, by definition of~$\Sbycrt$. 
  We then have
  \begin{equation*}
    A = F(1) + \sum_{j=2}^p \frac{1}{j} \bigl[ \, F(j) -
    F(j-1) \bigr] = \sum_{j=1}^{p-1} \left( \frac{1}{j} - \frac{1}{j+1}
    \right) F(j) + \frac{F(p)}{p} \ .
  \end{equation*}    
  This leads to
  \begin{equation}
    \bbE \left[ \frac{\abs{\hatS \cap \cS^c} }{\abs{\hatS} \vee 1} \right]  
    \leq \sum_{j=1}^{p-1} \left( \frac{1}{j} - \frac{1}{j+1} \right) F(j) + \frac{F(p)}{p}
  \end{equation}
  By the definition of $p_{i,j,k}$ in Eq.~\eqref{eq:p-ikj}, we have
  \begin{equation*}
    F(j) = \sum_{i \in \cS^c} \bbP( \hatP_i \leq j \bar{\alpha}  \text{ and } i \in \hatS )
    \leq \sum_{i \in \cS^c} \bbP( \hatP_i \leq j \bar{\alpha} ). 
  \end{equation*}
  Therefore
  \begin{equation*}
    \bbE \left[ \frac{\abs{\hatS \cap \cS^c} }{\abs{\hatS} \vee 1} \right]  
    \leq \sum_{i \in \cS^c} \sum_{j=1}^{p-1} \frac{\bbP(\hatP_i \leq j
        \bar{\alpha})}{j(j + 1)} + \sum_{i \in \cS^c} \frac{\bbP( \hatP_i \leq p\bar{\alpha} )}{p}
  \end{equation*}

  Taking the limit where $n \to \infty$ and $p$ fixed, we have, using the
  result in \Cref{thm:clt-crt-logit},
  \begin{align*}
    \limsup_{n \to \infty} \bbE \left[ \frac{\abs{\hatS \cap \cS^c}}{\abs{\hatS} \vee 1} \right]
    &\leq \sum_{i \in \cS^c} \left(\sum_{j=1}^{p-1} \frac{1}{j+1} + 1 \right)\bar{\alpha}
    \\
    &= \left( \sum_{j=1}^{p} \frac{1}{j} \right) \abs{\cS^c} \bar{\alpha} \ .
\end{align*}
We conclude the proof by noting that
$\bar{\alpha} \egaldef \dfrac{\alpha}{p\sum^p_{j=1} 1/j}$.
\end{proof}

\section{Controlling False Discovery Rate Procedures}

\begin{df}[Benjamini-Hochberg procedure \citep{benjamini_controlling_1995}]
  \label{df:by}
  Let $\alpha \in (0, 1)$ be the predefined FDR control level.
  Let $\hat{p}_1, \dots, \hat{p}_m$ be output p-values from inference
  algorithm, \eg Algorithm~\ref{alg:CRT-logit}.
  We reorder them ascendingly, denoted by
  $\hat{p}_{(1)} \leq \hat{p}_{(2)} \leq \dots \leq \hat{p}_{(p)}$ and
  $\cH_0^{(1)}, \dots, \cH_0^{(p)}$, then
  \begin{enumerate}
  \item Find $\widehat{k}_{BH}$ such that
    \begin{equation*}
      \widehat{k}_{BY} = \max \left\{ k \in [p] \mid \hat{p}_{(k)} \leq
        \dfrac{k\alpha}{p} \right\}.
    \end{equation*}
  \item If $\widehat{k}_{BH}$ exists, take
    $\widehat{S} = \{j \in [p]: \hat{p}_{(j)} \leq \hat{p}_{\widehat{k}_{BH}}\}$.
    Otherwise $\widehat{S} = \emptyset$.
  \end{enumerate}
\end{df}

\begin{df}[Benjamini-Yekutieli procedure \citep{benjamini_control_2001}]
  \label{df:by}
  Let $\alpha \in (0, 1)$ be the predefined FDR control level.
  Let $\hat{p}_1, \dots, \hat{p}_m$ be output p-values from
  Algorithm~\ref{alg:CRT-logit}.
  We reorder them ascendingly, denoted by
  $\hat{p}_{(1)} \leq \hat{p}_{(2)} \leq \dots \leq \hat{p}_{(p)}$ and
  $\cH_0^{(1)}, \dots, \cH_0^{(p)}$, then
  \begin{enumerate}
  \item Find $\widehat{k}_{BY}$ such that
    \begin{equation*}
      \widehat{k}_{BY} = \max \left\{ k \in [p] \mid \hat{p}_{(k)} \leq
        \dfrac{k\alpha}{p\sum^p_{i=1}1/i} \right\}.
    \end{equation*}
  \item If $\widehat{k}_{BY}$ exists, take
    $\widehat{S} = \{j \in [p]: \hat{p}_{(j)} \leq \hat{p}_{\widehat{k}_{BY}}\}$.
    Otherwise $\widehat{S} = \emptyset$.
  \end{enumerate}
\end{df}

\section{Setting the $\ell_1-$Regularization Parameter of the $\bX_{*, j}$-distillation}
\label{ssec:setting-l1}
A core issue is the dependency of the statistical power and FDR of CRT-logit on
the $\ell_1-$ regularization parameter $\lambda_{dx}$ when doing Lasso
distillation on $x_j$ in Eq.~\eqref{eq:scaled-betadx}.
One might choose the heuristic value
$\lambda_{\text{univ}} = \sqrt{n^{-1}\log p}$ with theoretical validity, as
suggested in \cite{ning_general_2017,van_de_geer_asymptotically_2014}.
However, experimental results in Fig.~\ref{fig:fdr-power-n-lambda} show that at
$\lambda_{dx} = \lambda_{\text{univ}}$ (or
$\log_{10}\lambda/\lambda_{\text{univ}} = 0.0$ with the labeling of the
figure), we do not have the best possible FDR/Power with CRT-logit inference.
For this experiment, we average the inference results of 100 simulations (with
similar setting in Section~\ref{ssec:qqplot}) for different values of $n$ and
$\lambda_{dx}$, with $p$ fixed.
There is a clear phase transition in both FDR and average power when the
regularization parameter $\lambda_{dx}$ increases.
In other words, we have found empirically that both FDR and power of the method
are sensitive to the $\ell_1-$regularization parameter.
Preferably, one wants to return a high statistical power while controlling FDR under predefined level.
Hence, it is necessary to choose $\lambda_{dx}$ wisely.
In a more practical scenario, we advise to use cross-validation for
$\bX_{*, j}$-distillation operator, as defined by Eq.~\eqref{eq:scaled-betadx}.
This means we would have to find $p$ different values of $\lambda_{dx}$ with
cross-validation, and we reemphasize the importance of the screening step to
reduce the number of computations.
\begin{figure}[h]
  \centering
  \includegraphics[width=\textwidth]{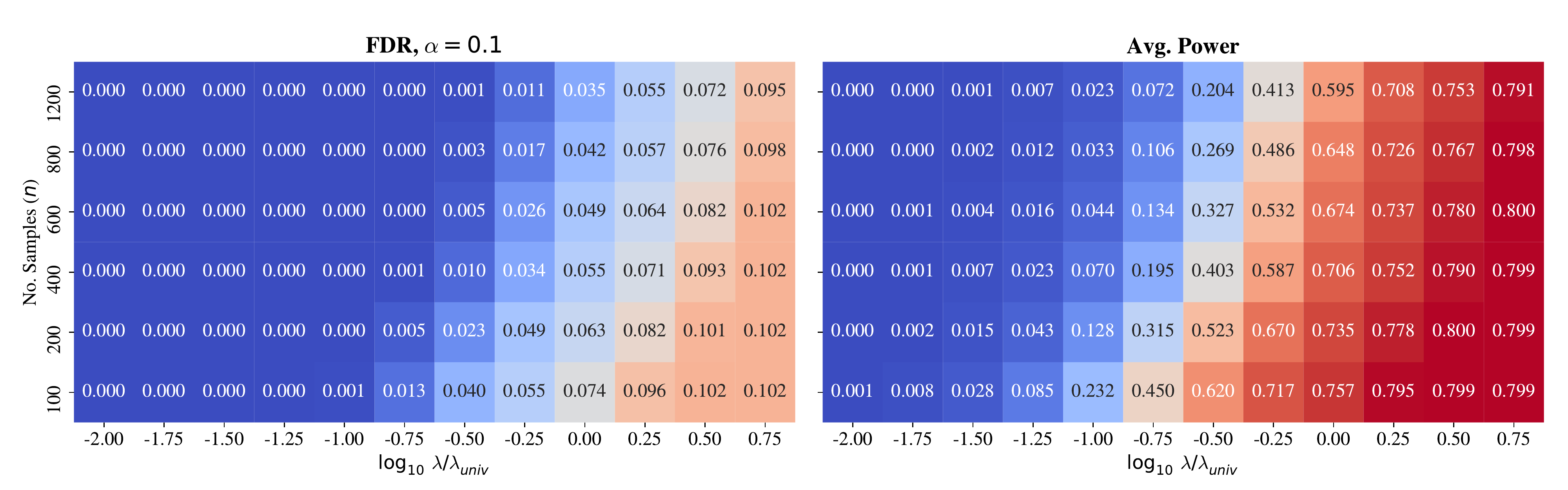}
  \caption{\textbf{FDR/Average Power of 100 runs of simulations while varying
      the number of samples and $\ell_1$ regularization parameter and fixing
      the number of variables}. Note: $\lambda_{dx}$ is scaled with the factor
    $\lambda_{\text{univ}} = \sqrt{\log(p) / n}$, \eg the first value for
    regularization grid is $\lambda_{dx} = 10^{-2} \lambda_{\text{univ}}$.
    Default parameter (similar settings in Section~\ref{ssec:qqplot}):
    $p=400$, SNR=$3.0$ (signal-to-noise ration), $\rho=0.5$ (feature correlation),
    $\kappa=0.05$ (sparsity). FDR is controlled at level $\alpha=0.1$.}
  \label{fig:fdr-power-n-lambda}  
\end{figure}

\section{Pseudocode for CRT-logit and Related Algorithms}
\label{sec:pseudocode}
\begin{algorithm}[h]
  \small
  \SetAlgoLined
{\textbf{INPUT} dataset $(\bX, \by)$, with $\bX \in \bbR^{n \times p}$, $\by \in
  \bbR^n$, number of sampling runs $B$, test statistic $T_j$, conditional
distribution $P_{j \mid -j}$ for each $j = 1, \dots, p$} ; \\
{\textbf{OUTPUT} vector of p-values $\{\hat{p}_j\}_{j=1}^p$};  \\
\For{$j = 1, 2, \dots, p$}{
  Compute test statistics $T_j$ for original variable; \\
  \For{$b = 1, 2, \dots, B$}{
    1. Generate $\tilde{\bX}_{*, j}^{(b)}$, a knockoff sample from $P_{j
    \mid -j}$; \\
    2.  Compute $\tilde{T}_j^{(b)}$ for knockoff variables;
  }
  Compute the empirical p-value
  $$
    \hatP_j = \dfrac{1 + \sum_{b=1}^{B}
    \textbf{1}_{\tilde{T}_j^{(b)} \geq T_j}}{1 + B}
  $$
}
\caption{Conditional Randomization Test \citep{candes_panning_2018}}
\label{alg:CRT}
\end{algorithm}
\begin{algorithm}[h]
  \small
  \SetAlgoLined
{\textbf{INPUT} dataset $(\bX, \by)$, $X \in \bbR^{n \times p}$, $\by \in \bbR^{n}$, test
statistic $T_j$ for each $j = 1, \dots, p$}; \\
{\textbf{OUTPUT} vector of p-values $\{p_j\}_{j=1}^p$}; \\
$\hat{\cS}^{\text{SCREENING}} = \{j: j \in [p], \hat{\beta}^{\text{MLE}}_j \neq
0\}$ \tcp{Using Eq.~\eqref{eq:mle}}
\For{$j \in \hat{\cS}^{\text{SCREENING}}$}{
  1. Distill information of  $\bX_{-j}$ to $\bX_{*, j}$ and to $\by$ by finding:
  \begin{itemize}
  \item
    $\betady(\lambda) \gets \texttt{solve\_sparse\_logistic\_cv}(\bX_{-j}, \by
)$ \tcp{Using
  Eq.~\eqref{eq:mle}
}
  \item $\betadx(\lambda) =
    \text{argmin}_{\boldsymbol{\beta} \in \mathbb{R}^{p - 1}} \dfrac{1}{2}
    \left\lVert {\bX_{*, j} - \mathbf{X}_{-j}\boldsymbol{\beta}}
    \right\rVert_2^2 + \lambda_{dx}\left\lVert {\boldsymbol{\beta}}
    \right\rVert_1$ \tcp{with $\lambda_{dx}$ set using cross-validation}
  \end{itemize}
  2. Obtain test statistic:
  $$
  T_j = 
      \sqrt{n} \dfrac{\inner{\by - \bX_{-j}\betady \ , \bX_{*, j} - \bX_{-j}\betadx}}{
      \left\lVert{\by - \bX_{-j} \betady} \right\rVert_2
      \left\lVert {\bX_{*, j} - \bX_{-j}\betadx} \right\rVert_2
    }
  $$
  3. Compute (two-sided) p-value
  $$
  \hatP_j = 2\left[1 - \Phi\left( T_j \right) \right]
  $$
  }
  \caption{Lasso-Distillation Conditional Randomization Test
    \citep{liu_fast_2020}}
  \label{alg:dCRT}
\end{algorithm}
\begin{algorithm}[h]
  \small
  \SetAlgoLined
{\textbf{INPUT} dataset $(\bX, \by)$, with $\bX \in \bbR^{n \times p}$, $\by \in
  \bbR^n$, number of sampling runs $B$, test statistic $T_j$, conditional
distribution $P_{j \mid -j}$ for each $j = 1, \dots, p$}, empirical risk $L(\cdot)$ ; \\
{\textbf{OUTPUT} vector of p-values $\{\hat{p}_j\}_{j=1}^p$};  \\
$(\bX_{\text{train}}, \by_{\text{train}}), (\bX_{\text{test}},
\by_{\text{test}}) \gets \texttt{data\_splitting}(\bX, \by)$; \\
$\hat{f}_{\theta} \gets \texttt{model\_fitting}(\bX_{\text{train}}, \by_{\text{train}})$; \\

\For{$j = 1, 2, \dots, p$}{
  $T_j \gets L(\bX_{\text{test}}, \by_{\text{test}}, \hat{f}_{\theta}(\bX_{\text{test}}))$; \\  
  \For{$b = 1, 2, \dots, B$}{  
    1. Generate $\tilde{\bX}_{*, j}^{(b)} \sim P_{j \mid -j}$; \\
    2. $\tilde{T}_j^{(b)} \gets L(\tilde{\bX}_{*, j}^{(b)}, \by_{\text{test}}, \hat{f}_{\theta}(\tilde{\bX}_{*, j}^{(b)}))$;
  }
  Compute the empirical p-value
  $$
    \hatP_j = \dfrac{1 + \sum_{b=1}^{B} \textbf{1}_{\tilde{T}_j^{(b)} \geq T_j}}{1 + B}
  $$
}
\caption{Holdout Randomization Test \citep{tansey_holdout_2018}}
\label{alg:HRT}
\end{algorithm}

\section{Time complexity of Related Methods}

We present the time complexity of benchmarked methods in
Table~\ref{table:time-complexity}.
\begin{table}[h]
  \small
  \centering
  \caption{\textbf{Time complexities of related methods with CRT-logit}, where
    $p$ is the dimension size (number of variables), $B$ is the number of
    sampling runs, and $\hat{k} \ll p$ the cardinality of the screening set
    (see Section~\ref{sec:pseudocode} for more details).}
\begin{tabular}{lp{3cm}p{6.0cm}}
  \toprule
  \textbf{Methods} & \textbf{Time (Iteration) Complexity} & \textbf{References} \\
  \midrule
  Debiased Lasso & $\cO(p^4)$ & \cite{zhang_confidence_2014,van_de_geer_asymptotically_2014,javanmard_confidence_2014}\\
  Knockoff Filter & $\cO(p^3)$ & \cite{barber_controlling_2015,candes_panning_2018}\\
  CRT & $\cO(Bp^4)$ & \cite{candes_panning_2018}\\
  HRT & $\cO(p^3 + Bp^2)$ & \cite{tansey_holdout_2018}\\  
  dCRT (with screening )& $\cO(\hat{k}p^3)$ & \cite{liu_fast_2020} \\
  \textbf{CRT-logit (with screening)} &  $\cO(\hat{k}p^3)$ & \textbf{(this work)} \\
  \bottomrule
\end{tabular}
\label{table:time-complexity}
\end{table}

\section{Additional Details on Experiments in Section~\ref{sec:empirical}}
\label{sec:details-experiment}

\subsection{Preprocessing of the brain-imaging dataset}
The Human Connectome Project dataset (HCP) is a collection of brain imaging data
on healthy young adult subjects with age ranging from 22 to 35.
The participants performed different tasks while being scanned by a magnetic
resonance imaging (MRI) device to record blood oxygenation level dependent
(BOLD) signals of the brain.
The aim of this analysis is to investigate which areas of the brain
can predict cognitive activity across participants, while taking into
account the information from other brain regions.
The brain imaging modalities include, among others, resting-state fMRI (R-fMRI)
and task-evoked fMRI (T-fMRI).
In this work, we only deal with decoding the task-evoked fMRI dataset.
The four classification problems we are working with are as follows.
\begin{itemize}
\item Relational: predict whether the participant matches figures or identified
  feature similarities.
\item Gambling: predict whether the participant gains or loses gambles.
\item Emotion: predict whether the participant watches an angry face or a geometric shape.  
\item Social: predict whether the participant watches a movie with social behavior or not.
\end{itemize}
To perform dimension reduction, our goal is to apply a clustering scheme that
keeps the spatial structure of the data.
This is achieved with data-driven parcellation along with a spatially
constrained clustering algorithm, following the conclusions by
\cite{varoquaux_small-sample_2012} and \cite{thirion_which_2014}.
The hierarchical clustering scheme that we use recursively merges pair of
clusters of features based on a criterion that minimized the within-cluster
variance.
This algorithm is implemented in \texttt{scikit-learn}
\cite{pedregosa_scikit-learn_2011}, a popular package for applied machine
learning.

\section{Example of decoding maps in semi-realistic brain-analysis experiment
  of Section~\ref{ssec:simu-hcp}}

\begin{figure}[h]
  \centering
  \begin{subfigure}[c]{0.35\linewidth}
    \centering
    \includegraphics[width=\textwidth]{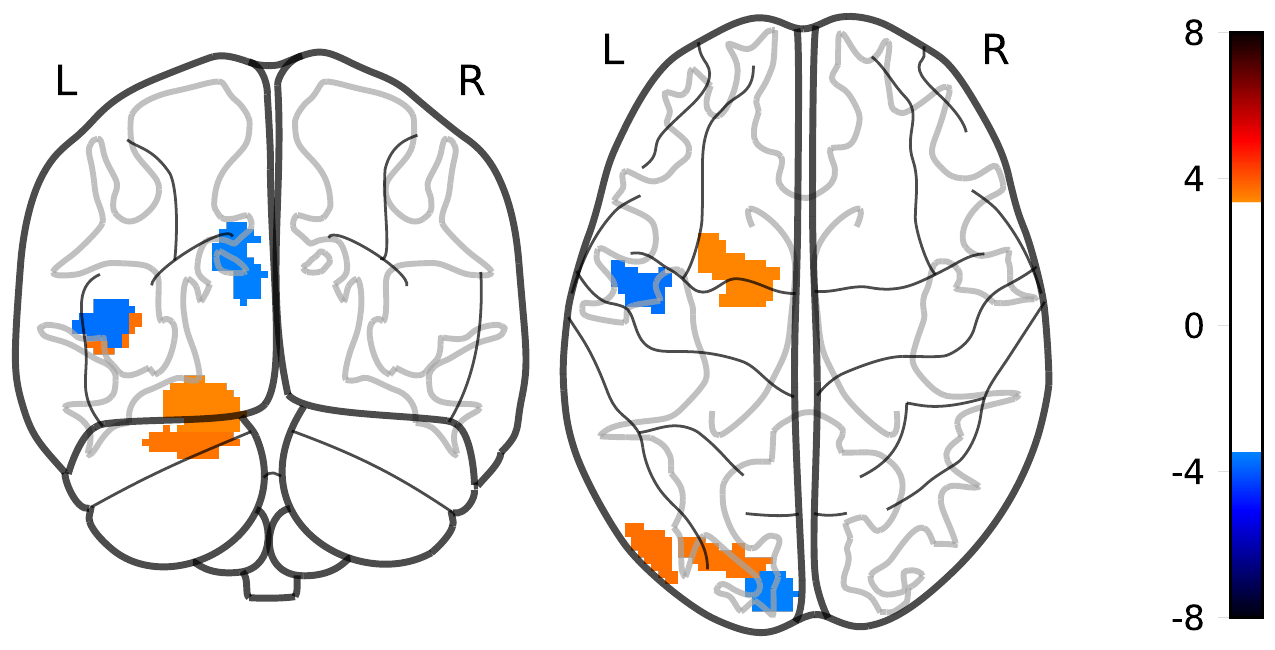}
    \caption{Debiased Lasso (\texttt{dlasso})}
  \end{subfigure}
  \begin{subfigure}[c]{0.28\linewidth}
    \centering
    \includegraphics[width=\textwidth]{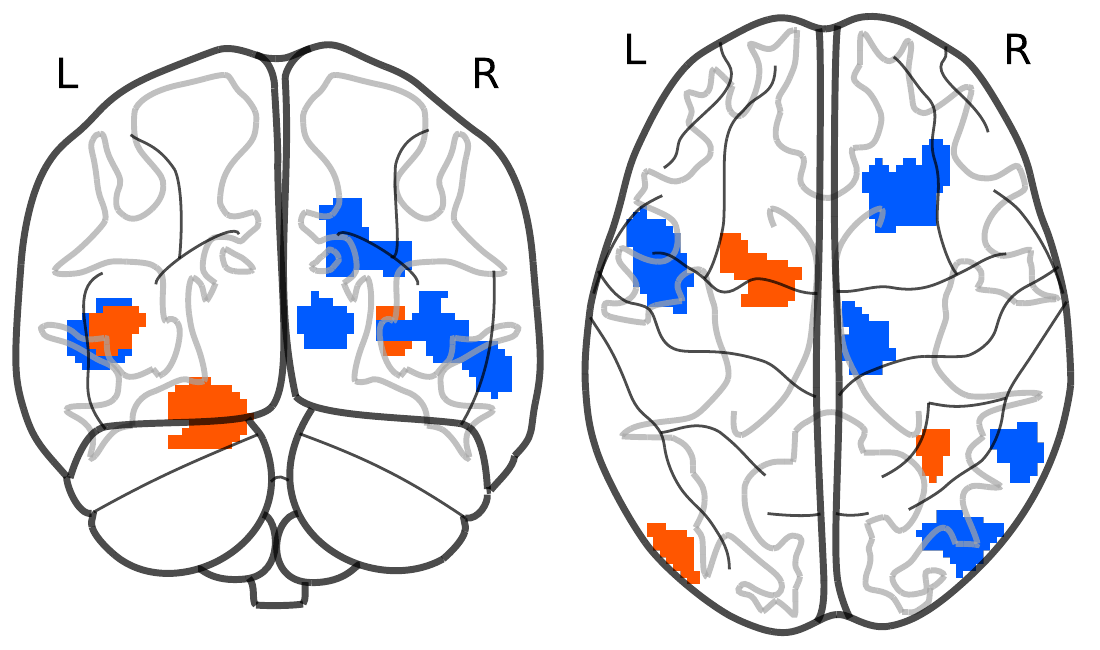}
    \caption{Knockoffs (\texttt{KO})}
  \end{subfigure}
  \begin{subfigure}[c]{0.28\linewidth}
    \centering
    \includegraphics[width=\textwidth]{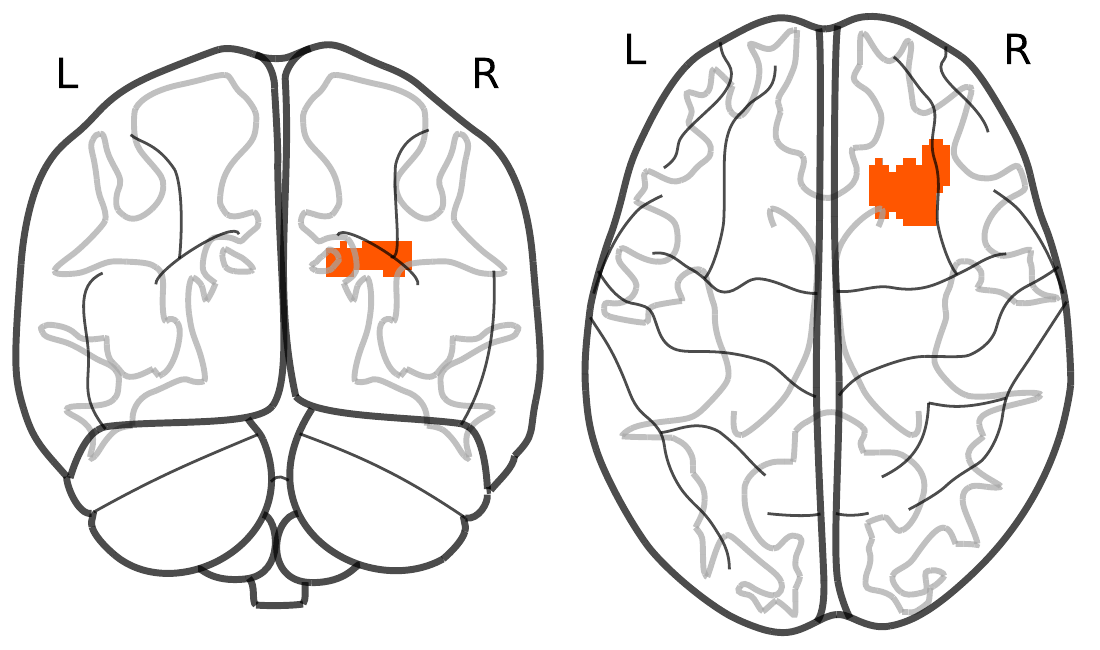}
    \caption{CRT (\texttt{CRT}), $B=500$}
  \end{subfigure}
  \begin{subfigure}[c]{0.28\linewidth}
    \centering
    \includegraphics[width=\textwidth]{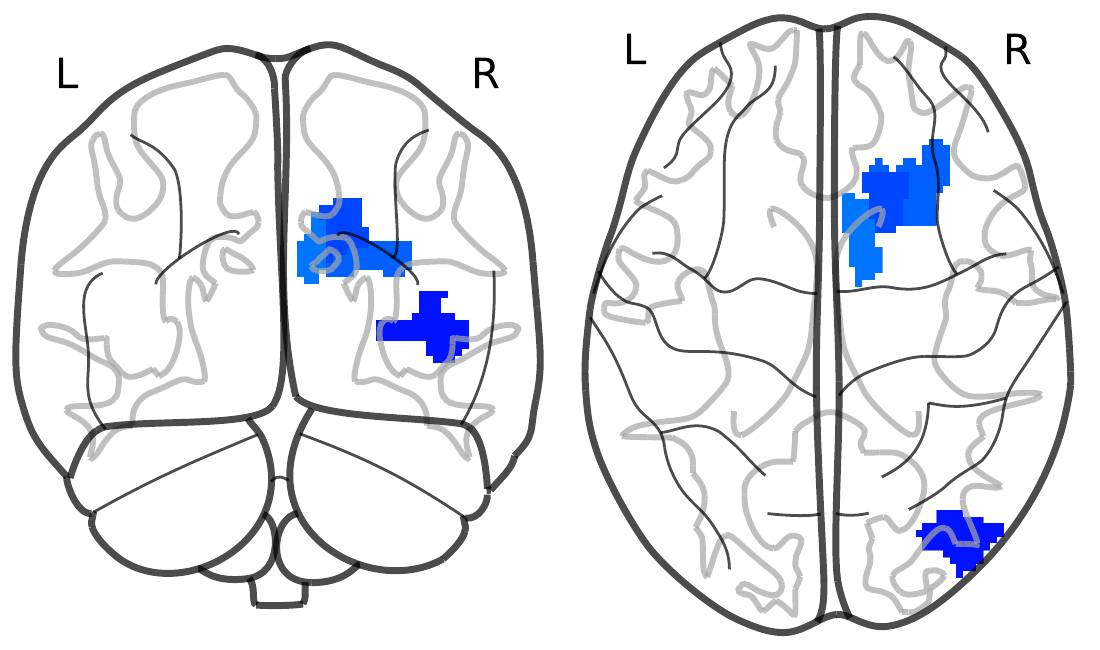}
    \caption{Distilled-CRT (\texttt{dCRT})}
  \end{subfigure}
  \begin{subfigure}[c]{0.35\linewidth}
    \centering
    \includegraphics[width=\textwidth]{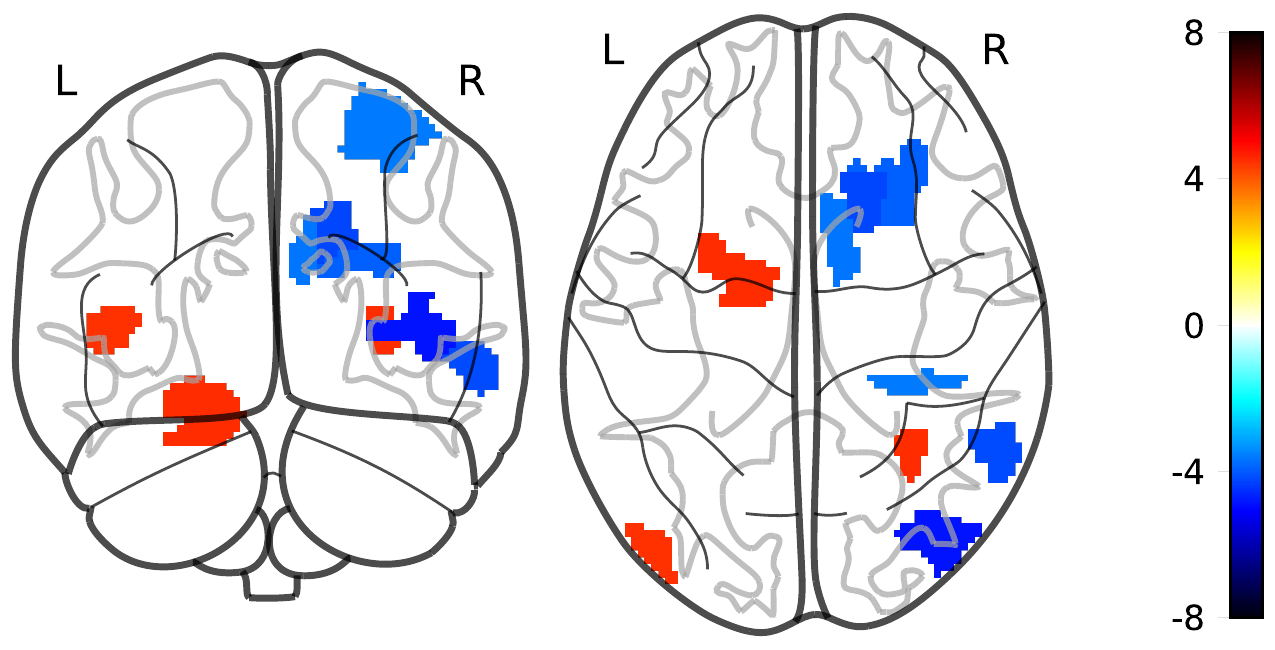}
    \caption{CRT-logit (our method)}
  \end{subfigure}  
  \caption{\textbf{Decoding maps of Relational task in semi-realistic HCP900
      experiment, using 400 subjects and dimension reduction to 1000 clusters
      (\ie one random seed for generating labels $\by$)}. We omit Holdout
    Randomization Test (\texttt{HRT}) as the method does not select any brain
    region.  For \texttt{dlasso}, \texttt{dCRT} and \texttt{CRT-logit}, we plot
    the test-statistics; for \texttt{KO} the sign of selected coefficients, and
    for \texttt{CRT} the $-log_{10}$ of the empirical p-values.  }
  \label{fig:qqplot}
\end{figure}

\end{document}